\documentclass[10pt,twocolumn,letterpaper]{article}

\usepackage[pagenumbers]{cvpr} 

\definecolor{cvprblue}{rgb}{0.21,0.49,0.74}
\usepackage[pagebackref,breaklinks,colorlinks,allcolors=cvprblue]{hyperref}

\usepackage{colortbl}
\usepackage{booktabs}
\usepackage{subcaption}
\usepackage{xcolor} 

\usepackage{amsmath}
\usepackage{amssymb, amsfonts}
\usepackage{mathtools}
\usepackage{amsthm}

\usepackage[accsupp]{axessibility}  

\theoremstyle{plain}
\newtheorem{theorem}{Theorem}[section]

\theoremstyle{definition}
\newtheorem{definition}[theorem]{Definition}
\newtheorem{assumption}[theorem]{Assumption}
\theoremstyle{remark}

\def\confName{CVPR}
\def\confYear{2026}

\title{Deconstructing the Failure of Ideal Noise Correction: A Three-Pillar Diagnosis}
\newcommand{\fc}{\mathtt{FC}}
\newcommand{\nc}{\mathtt{NC}}

\newcommand{\corr}{\mathtt{corr}}

\author{%
Chen Feng\textsuperscript{1,}\thanks{Corresponding author: \url{https://mrchenfeng.github.io}.},\hspace{0.1cm}
Zhuo Zhi\textsuperscript{2},\hspace{0.1cm}
Zhao Huang\textsuperscript{3},\hspace{0.1cm}
Jiawei Ge\textsuperscript{4},\hspace{0.1cm}
Ling Xiao\textsuperscript{5} \\
Nicu Sebe\textsuperscript{6},\hspace{0.1cm}
Georgios Tzimiropoulos\textsuperscript{4},\hspace{0.1cm}
Ioannis Patras\textsuperscript{4} \\
\textsuperscript{1}Queen's University Belfast \quad
\textsuperscript{2}University College London \quad
\textsuperscript{3}University of Aberdeen \\
\textsuperscript{4}Queen Mary University of London \quad
\textsuperscript{5}Hokkaido University \quad
\textsuperscript{6}University of Trento
}

\begin{document}
\maketitle
\begin{abstract}
Statistically consistent methods based on the noise transition matrix ($T$) offer a theoretically grounded solution to Learning with Noisy Labels (LNL), with guarantees of convergence to the optimal clean-data classifier. In practice, however, these methods are often outperformed by empirical approaches such as sample selection, and this gap is usually attributed to the difficulty of accurately estimating $T$. The common assumption is that, given a perfect $T$, noise-correction methods would recover their theoretical advantage.
In this work, we put this longstanding hypothesis to a decisive test. We conduct experiments under idealized conditions, providing correction methods with a \emph{perfect, oracle transition matrix}. Even under these ideal conditions, we observe that these methods still suffer from performance collapse during training. This compellingly demonstrates that the failure is not fundamentally a $T$-estimation problem, but stems from a more deeply rooted flaw.
To explain this behaviour, we provide a unified analysis that links three levels: macroscopic convergence states, microscopic optimisation dynamics, and information-theoretic limits on what can be learned from noisy labels. Together, these results give a formal account of why ideal noise correction fails and offer concrete guidance for designing more reliable methods for learning with noisy labels.
\end{abstract}    
\section{Introduction}
\label{sec:introduction}

Learning with noisy labels (LNL) is a fundamental and pervasive challenge in modern machine learning. Label inaccuracies, often arising from human error or automated annotation, can seriously bias model training and weaken generalisation. To address this, a prominent family of solutions, known as \emph{statistically consistent} methods, offers theoretical guarantees of convergence to the optimal classifier one would obtain from clean data. Central to this class is the \emph{noise transition matrix} ($T$) framework~\cite{loss_correction,loss_correction2,noiseadaptation}, which explicitly models the label corruption process to enable principled loss correction or risk estimation.

\begin{figure*}[t]
\centering
\resizebox{\textwidth}{!}{
\begin{minipage}{0.49\textwidth}
\centering
\includegraphics[width=\linewidth]{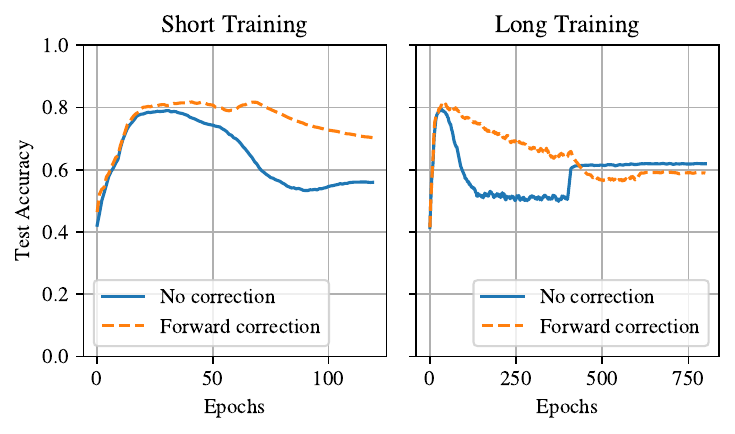}
\subcaption{Test accuracy on CIFAR-10 with 50\% symmetric noise.}
\label{fig:cifar10_1}
\end{minipage}
\hfill
\begin{minipage}{0.49\textwidth}
\centering
\includegraphics[width=\linewidth]{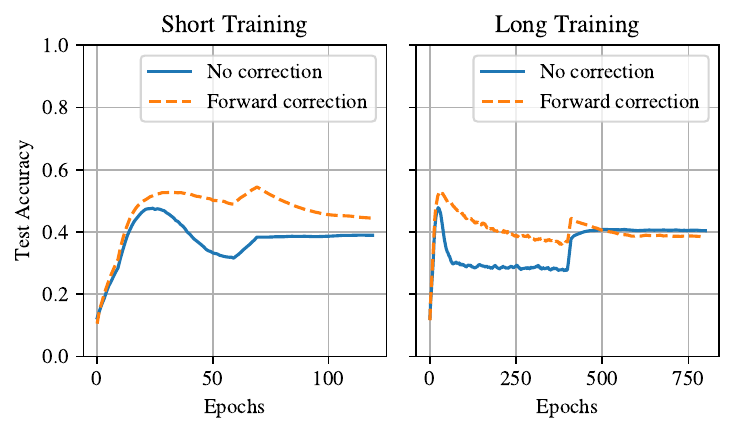}
\subcaption{Test accuracy on CIFAR-100 with 50\% symmetric noise.}
\label{fig:cifar100_1}
\end{minipage}
}
\caption{Comparison of test accuracy across different training durations. Even when provided with the ground-truth transition matrix (Oracle $T$), the model's performance collapses after an initial peak. Prolonged training sees the correction method's efficacy degrade, ultimately offering no benefit over the uncorrected baseline. 
}
\label{fig:ideal_failure}
\end{figure*}

Yet, despite their theoretical elegance, a persistent paradox remains: in practice these principled methods are often outperformed by empirically successful yet \emph{statistically approximate} approaches, such as sample selection~\cite{coteaching, dividemix}. A superficial explanation for this performance gap is that many state-of-the-art sample selection methods are complex hybrid frameworks, integrating powerful techniques such as model ensembling~\cite{coteaching, coteaching+}, semi-supervised learning~\cite{dividemix}, and advanced data augmentations~\cite{augdesc, selcl}. However, these components are conceptually orthogonal to the core LNL setting and could, in principle, be combined with either family of methods.

This shifts the focus to a more dominant hypothesis within the LNL community: the failure of noise correction methods is attributed almost exclusively to the difficulty of accurately estimating the noise transition matrix $T$~\cite{xia2019anchor,yao2020dual,liu2023identifiability}. The widely held belief is that, if a sufficiently accurate $T$ were available, this paradox would be resolved and these theoretically motivated methods would regain their expected advantage.

In this work, we put this longstanding hypothesis to a decisive test. We conduct controlled experiments under idealised conditions, providing the correction methods with a perfect, oracle transition matrix. To isolate the core mechanism, our experimental setup is deliberately minimalistic, removing auxiliary components that are often bundled with such methods. The results, shown in~\Cref{fig:ideal_failure}, challenge the conventional view. Even with a perfect $T$, Forward Correction (FC) exhibits a pronounced rise-and-fall dynamic and eventually converges to the same poor performance as direct training without any loss correction. This is strong evidence that the failure of noise correction is not fundamentally a problem of estimating $T$; instead, it reflects a deeper limitation of the corrected objective itself.

In this paper, we conduct a rigorous investigation to diagnose the root causes of this \emph{ideal tool failure}. Our objective is explicitly \emph{not} to propose yet another correction heuristic, but rather to provide a comprehensive theoretical analysis that systematically explains \emph{why} these principled methods fail even when supplied with perfect information. To this end, our analysis departs from traditional asymptotic bounds and moves beyond common simplifying assumptions such as class-conditional noise (CCN) or deterministic posteriors, so that the conclusions apply to a broad range of practical settings. Our diagnosis is built on three complementary results:
\begin{enumerate}
    \item A \textbf{macroscopic analysis} contrasting the \textit{Ideal Fitted Case} with the \textit{Empirical Overfitted Case} to characterise the final convergence states.
    \item A \textbf{microscopic analysis} examining the optimisation dynamics and per-sample gradients to expose inherent instabilities.
    \item A \textbf{fundamental analysis} adopting an information-theoretic view to quantify the unavoidable information loss introduced by the noise corruption process.
\end{enumerate}
Collectively, these findings uncover the mechanisms that drive this paradox and provide the community with a rigorous account of why statistically consistent methods falter, pointing towards design principles for more reliable LNL solutions.

\section{Related Work}
\label{sec:related_works}


\paragraph{Noise Transition Matrix (T-Matrix).}
This paradigm seeks to establish strong statistical foundations, aiming to guarantee \emph{classifier consistency} asymptotically~\cite{,chu2023topology,chu2025deep,chu2025tmi,li2025hybrid,li2024size,bao2025towards,zhang2024learnable,zhang2025exploit,zhang2025molebridge,li2026dontletinformationslip}. Seminal works introduced Forward and Backward Correction to rectify loss or risk using the matrix $T$~\cite{loss_correction,loss_correction2,natarajan2013learning,liu2025fedadamw,liu2025fedmuon,liuimproving,jiang2025towards,yang2024regulating,xie2025seqgrowgraph}. Consequently, the field dedicated extensive effort to the accurate estimation of $T$, evolving from early \emph{anchor point assumptions}~\cite{liu2015classification,xia2019anchor} to advanced structure-based techniques (e.g., volume minimization, clusterability~\cite{yao2020dual,li2021provably,zhu2021clusterability}). Recently, attention shifted to the complexity of Instance-Dependent Noise (IDN)~\cite{chen2021INDnoise, xia2020part}, though identifying IDN remains generally ill-posed without strong inductive biases~\cite{liu2023identifiability}.


\paragraph{Robust Loss Functions.}
A parallel theoretical approach focuses on designing inherently tolerant loss functions, such as GCE, MAE, and SCE~\cite{generalized_cross_entropy,ghosh2015making,symmetric_cross_entropy}. Unlike matrix-based methods, these approaches bypass the explicit modeling of the noise process via $T$, achieving asymptotic Bayes consistency by imposing specific structural constraints, notably the symmetry condition.

\paragraph{Empirically-Driven Sample Selection.}
These methods prioritize empirical efficacy using heuristics (e.g., the `small-loss' hypothesis) to filter clean samples. Pioneering works like Co-teaching~\cite{coteaching,coteaching+} utilized dual networks for cross-filtering. This evolved into a diverse array of filtering and regularization techniques based on prediction confidence, graph consistency, and feature-space topology~\cite{dividemix,selfie,propmix,topofilter,ngc,Feng2022SSR,moit,liu2024fedbcgd,qu2024conditional,qu2025end,liu2025consistency,liu2025dp,Feng2025OpenSet,Feng2024NoiseBox,Feng2024CLIPCleaner}. Recent state-of-the-art frameworks further integrate semi-supervised and contrastive learning~\cite{dividemix,selcl,karim2022unicon_selection,Feng2023MaskCon,Feng2022adaptive}. While adaptable, they fundamentally rely on the implicit regularization of deep networks rather than formal statistical correction.
\section{Problem Setting and Preliminaries}
\label{sec:preliminaries}

In this section, we formalize the framework of LNL, establish the necessary notation, and define the rigorous assumptions that underpin our theoretical analysis.

\subsection{Problem Formulation}
\label{subsec:formulation}

Let $\mathcal{X} \subseteq \mathbb{R}^d$ be the feature space and $\mathcal{Y} = \{1, \dots, K\}$ be the label space. Let $\Delta^{K-1}$ denote the probability simplex over $K$ classes. We assume the data is generated from a joint distribution $\mathcal{D}$ over $\mathcal{X} \times \mathcal{Y}$. For any instance $x \in \mathcal{X}$, there exists a latent, clean label $Y$ drawn from the conditional posterior $P(Y|X=x)$.
However, in the LNL setting, the clean label $Y$ is unobserved. Instead, the learner is presented with a noisy dataset $S_n = \{(x_i, y^n_i)\}_{i=1}^N$, where $y^n_i \in \mathcal{Y}$ represents the observed noisy label.

The label corruption process is governed by an \emph{instance-dependent noise transition matrix} $T: \mathcal{X} \to [0, 1]^{K \times K}$. The entry $T_{ij}(x)$ represents the probability of the true label $i$ flipping to the noisy label $j$ given the instance $x$:
\begin{equation}
    T_{ij}(x) \triangleq P(Y^n=j \mid Y=i, X=x).
\end{equation}
This formulation relates the clean posterior $\eta(x) \triangleq [P(Y=k|x)]_{k=1}^K$ to the noisy posterior $\eta^n(x) \triangleq [P(Y^n=k|x)]_{k=1}^K$ via the linear transformation:
\begin{equation}
\label{eq:transition}
    \eta^n(x) = T(x)^\top \eta(x).
\end{equation}
\begin{assumption}[Diagonal-Dominant Noise]\label{assumption:diagonal_dominant} We assume $T(x)$ is diagonally dominant ($T_{ii}(x)>T_{ij}(x),j\neq i$) for all $x \in \mathcal{X}$. Crucially, we \emph{do not} rely on the simplifying assumption of Class-Conditional Noise (CCN); our formulation thus holds for the general, real-world Instance-Dependent Noise (IDN) setting.
\end{assumption}

\subsection{Learning Objectives}
\label{subsec:objectives}

The goal is to learn a scoring function $f: \mathcal{X} \to \mathbb{R}^K$ from a hypothesis class $\mathcal{F}$. The function $f(x)$ outputs a logit vector, which induces a predicted probability distribution $\hat{p}(x) = \text{softmax}(f(x)) \in \Delta^{K-1}$.

We adopt the \textit{Cross-Entropy (CE) loss} as the base loss function $\ell$. The ultimate objective is to minimize the \emph{expected clean risk}:
\begin{equation}
    R^{c}(f) \triangleq \mathbb{E}_{(X,Y) \sim \mathcal{D}} [\ell(f(X), Y)],
\end{equation}
where $\ell(f(x), y) \triangleq -\log(\hat{p}_y(x))$.
Since clean labels are inaccessible, a naive approach is to minimize the \emph{empirical noisy risk} defined on $S_n$:
\begin{equation}
    \widehat{R}_{\nc}^{n}(f) \triangleq \frac{1}{N}\sum_{i=1}^{N} \ell(f(x_i), y^n_i).
\end{equation}
We refer to this as the \textbf{No Correction (NC)} baseline. Asymptotically, this converges to the expected noisy risk $R_{\nc}^{n}(f) = \mathbb{E}_{(X,Y^n)}[\ell(f(X), Y^n)]$, which is a biased estimator of $R^c(f)$ and typically yields a suboptimal classifier.

\subsection{Risk Correction}
\label{subsec:risk_correction}

To mitigate the bias inherent in $\widehat{R}_{\nc}^{n}(f)$, \textit{statistically consistent} methods construct a surrogate loss function. 
The gold standard for such methods is achieving \emph{Bayes-classifier consistency}:
$    \arg\min_{f \in \mathcal{F}} R_{\corr}^n(f) = \arg\min_{f \in \mathcal{F}} R^{c}(f).$
While two primary paradigms exist for risk correction—Forward Correction and Backward Correction\footnote{\textbf{Backward Correction (BC)} utilizes the inverse matrix $T^{-1}$ to provide an unbiased risk estimator ($R_{\text{BC}}^n = R^c$). However, we exclude BC from our primary analysis as it suffers from severe numerical instability: $T^{-1}$ typically introduces negative weights, rendering the loss unbounded below and prone to optimization collapse~\cite{kiryo2017positive,chou2020unbiased}.}—we focus our analysis on Forward Correction due to its optimization stability and widespread adoption.

\paragraph{Forward Correction (FC).}
FC explicitly models the noise generation process within the loss function. Instead of correcting the loss gradients (as in BC), FC corrects the model's prediction. Let $\hat{p}(x)$ be the model's prediction for the clean class posterior. The predicted probability for the \emph{noisy} class is then given by $T(x)^\top \hat{p}(x)$. The Forward Corrected loss is defined as:
\begin{equation}
    \ell_{\fc}(f(x), y^n; x) \triangleq -\log \left( \left[ T(x)^\top \hat{p}(x) \right]_{y^n} \right).
\end{equation}
Minimizing the expected risk under $\ell_{\fc}$ is proven to be classifier-consistent with the clean risk $R^c(f)$, provided $T(x)$ is known or accurately estimated.

\subsection{Evaluation Metrics}
\label{subsec:evaluation_metrics}

To rigorously assess model performance, we employ two complementary metrics:

\begin{itemize}
    \item \textbf{Classification Accuracy (ACC):} Measures the discriminative power of the model with respect to the ground truth labels:
    \[
        \mathrm{ACC}(f) = \mathbb{E}_{(X,Y)} \bigl[ \mathbb{I}(Y = Y^f(X)) \bigr],
    \]
    where $Y^f(X) = \arg\max_k \hat{p}_k(X)$.

    \item \textbf{Expected Calibration Error (ECE):} Measures the reliability of the model's confidence. Let $C^f(X) = \max_k \hat{p}_k(X)$ be the confidence of the prediction. ECE quantifies the expected deviation between confidence and true accuracy:
    \[
        \mathrm{ECE}(f) = \mathbb{E}_{C} \Bigl[ \bigl| P(Y = Y^f(X) \mid C^f(X)=C) - C \bigr| \Bigr].
    \]
\end{itemize}

\section{Theoretical Analysis of Noise Correction}
\label{sec:rethinking_T}
In this section, we dismantle the performance paradox of Forward Risk Correction. A substantial body of literature has established that Forward Correction is \textit{asymptotically consistent}~\cite{loss_correction,natarajan2013learning}. Formally, let $\mathcal{R}^n_{\fc}(f)$ be the population risk under Forward Correction. It holds that the minimizer $f^* = \arg\min_{f} R^n_{\fc}(f)$ coincides with the Bayes-optimal classifier for the clean distribution, i.e., $R^c(f^*) = \min_f R^c(f)$.

However, as demonstrated empirically in \Cref{fig:ideal_failure}, these theoretical guarantees do not translate to robustness in practice. Even when equipped with the oracle transition matrix $T(x)$, Forward Correction exhibits a characteristic ``overfitting" trajectory: accuracy peaks early but degrades significantly during prolonged training, eventually regressing to the performance of the uncorrected baseline (NC).

We posit that this discrepancy is strictly a \emph{finite-sample phenomenon}. Standard asymptotic analysis focuses on the convergence of the population risk $R(f) \to R^*$. In contrast, deep learning optimization is dominated by the \emph{empirical risk} $\widehat{R}(f)$. High-capacity networks are prone to memorizing the finite noisy dataset, driving $\widehat{R}(f) \to \min_f\widehat{R}(f)$ even when this empirical minimizer corresponds to a poor population-level solution.

To dissect this failure, our analysis departs from traditional statistical bounds and proceeds in three complementary stages:
\begin{enumerate}
\item \textbf{Macroscopic Analysis:} We characterize the terminal convergence states, contrasting the ideal population minimizer with the empirical minimizer of the finite noisy dataset.
\item \textbf{Microscopic Analysis:} We investigate the optimization dynamics via single-sample gradients to pinpoint the source of memorization.
\item \textbf{Fundamental Analysis:} We adopt an information-theoretic perspective to quantify the irreducible information loss induced by the noise channel $T(x)$.
\end{enumerate}

\subsection{Theoretical Framework}
\label{subsec:theoretical_framework}


To evaluate the alignment between the learning objective and the ground truth, 
we define the following optimal predictors:

\begin{itemize}
\item \emph{Clean Bayes-Optimal Classifier $Y^*(x)$:} The most probable class according to the latent clean posterior:
\[
Y^*(x) \triangleq \arg\max_{k \in \mathcal{Y}} P(Y = k \mid X=x).
\]

\item \emph{Inherent Uncertainty $\delta(x)$:} The probability mass assigned to non-optimal classes (aleatoric uncertainty):
\[
\delta(x) \triangleq 1 - P(Y = Y^*(x) \mid X=x).
\]
Note that unless the clean distribution is deterministic, $\delta(x) > 0$.

\item \emph{Noisy Bayes-Optimal Classifier $\tilde{Y}^*(x)$:} The most probable class according to the observed noisy posterior:
\[
\tilde{Y}^*(x) \triangleq \arg\max_{k \in \mathcal{Y}} P(Y^n = k \mid X=x).
\]
\end{itemize}

The core difficulty of the LNL problem lies in the misalignment between the clean and noisy objectives. We formalize this by partitioning the input space based on the consistency of these predictors.

\begin{definition}[Population-Level Consistency Partition]
\label{def:consistency_partition}
We partition the input space $\mathcal{X}$ into two disjoint sets:
\begin{align*}
\mathcal{X}_{\text{correct}} &= \{x \in \mathcal{X} \mid \tilde{Y}^*(x) = Y^*(x)\} \\
\mathcal{X}_{\text{error}} \;\; &= \{x \in \mathcal{X} \mid \tilde{Y}^*(x) \neq Y^*(x)\}
\end{align*}
\end{definition}

\noindent\textbf{Interpretation.} 
$\mathcal{X}_{\text{correct}}$ represents the ``benign" regime where the noise, despite corrupting individual samples, preserves the dominant class in the posterior distribution. $\mathcal{X}_{\text{error}}$ represents the ``malignant" regime where the noise channel $T(x)$ is sufficiently strong—relative to the clean margin—to flip the optimal decision boundary. Intuitively, samples with high inherent uncertainty $\delta(x)$ (i.e., ambiguous samples close to the decision boundary) are statistically prone to falling into $\mathcal{X}_{\text{error}}$.

\paragraph{Comparison with Prior Art.}
Most theoretical works before often assume a deterministic clean posterior (i.e., $\delta(x) \equiv 0$). Under this simplification, combined with Assumption~\ref{assumption:diagonal_dominant}, it strictly holds that $\mathcal{X}_{\text{error}} = \emptyset$. By acknowledging that $\delta(x) \ge 0$, our framework explicitly accounts for the $\mathcal{X}_{\text{error}}$ region and thus generalizes to broader scenarios. 
As we will show, the behavior of correction methods on this often-overlooked set is crucial.

\subsection{Macroscopic Analysis: Ideal vs. Overfitted States}
\label{subsec:macroscopic_analysis}

We begin our analysis by examining the two ``end states'' of the learning process. The central paradox of LNL—its theoretical success (asymptotic consistency) versus its practical failure (overfitting)—is best validated by contrasting these two extremes:

\begin{enumerate}
\item \textbf{The Ideal Fitted Case ($R(f) \to R^*)$}: This represents the \textit{asymptotic promise} (e.g., $N \to \infty$). It allows us to establish the theoretical \textit{optimum} of correction methods (i.e., ``what we hope to achieve'').
\item \textbf{The Empirical Overfitted Case ($\widehat{R}(f) \to 0$)}: This represents the \textit{finite-sample reality} of high-capacity deep networks. It models the failure mode where the model memorizes the noisy dataset (i.e., ``what we actually observe'').
\end{enumerate}

\subsubsection{Analysis of the Ideal Fitted Case}
\label{subsubsec:ideal_case}

We first analyze the scenario where the optimizer finds the true population minimizer $f^*$.
As established in \Cref{subsec:risk_correction}, Forward Correction (FC) is consistent with the clean distribution, whereas No Correction (NC) targets the noisy distribution. Their optimal population-level predictions $\hat{p}^*(x) = \text{softmax}(f^*(x))$ match the true posteriors:
\begin{equation*}
\hat{p}_{\nc}^*(x) = \eta^n(x), \quad \hat{p}_{\fc}^*(x) = \eta(x),
\end{equation*}
where $\eta(x)$ and $\eta^n(x)$ are the clean and noisy posteriors defined in \Cref{subsec:formulation}.
Based on these posteriors, we characterize their performance using the partition $(\mathcal{X}_{\text{correct}}, \mathcal{X}_{\text{error}})$ from Definition~\ref{def:consistency_partition}.

\begin{theorem}[Optimality and Consistency Gap under Ideal Fitting]
\label{theorem:ideal_performance}
Let $f_{\nc}$ and $f_{\fc}$ be the ideal population minimizers of the No Correction and Forward Correction risks, respectively.
\begin{enumerate}
\item[(a)] \textbf{Accuracy (ACC):} FC achieves the optimal Bayes accuracy. The accuracy gap $\Delta$ between FC and NC is non-negative and localized to the error set $\mathcal{X}_{\text{error}}$:
\begin{itemize}
    \item $\mathrm{ACC}(f_{\fc}) = 1 - \mathbb{E}_X[\delta(X)].$
\item $\mathrm{ACC}(f_{\fc}) - \mathrm{ACC}(f_{\nc}) = \Delta.$
\end{itemize}
where
\[
 \Delta \geq P(\mathcal{X}_{\text{error}}) \cdot \mathbb{E}_{X \mid \mathcal{X}_{\text{error}}}\Big[ \max\big(0, 1-2\delta(X)\big) \Big] \ge 0.
\]

\item[(b)] \textbf{Calibration (ECE):} FC achieves perfect calibration with respect to the clean labels, whereas NC is inherently miscalibrated:
$\mathrm{ECE}(f_{\fc}) = 0,  \mathrm{ECE}(f_{\nc}) > 0$.
\end{enumerate}
\end{theorem}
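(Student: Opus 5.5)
The plan is to work pointwise in $x$ and then integrate over $X$. We take as given the population-level identities stated just before the theorem, $\hat{p}^*_{\fc}(x)=\eta(x)$ and $\hat{p}^*_{\nc}(x)=\eta^n(x)$. Hence the FC prediction is $Y^f_{\fc}(x)=\arg\max_k \eta_k(x)=Y^*(x)$, and the NC prediction is $Y^f_{\nc}(x)=\tilde{Y}^*(x)$. Ties in the argmax are broken by a fixed rule and treated as measure zero.

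For part (a), the conditional accuracy of any deterministic prediction $g(x)$ is $P(Y=g(x)\mid X=x)=\eta_{g(x)}(x)$. For FC this equals $\eta_{Y^*(x)}(x)=1-\delta(x)$, and taking expectations gives $\mathrm{ACC}(f_{\fc})=1-\mathbb{E}_X[\delta(X)]$. This is the Bayes accuracy, because no predictor can exceed $\max_k\eta_k(x)$ pointwise.

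For the gap, I define
\[
\Delta \triangleq \mathbb{E}_X\big[\eta_{Y^*(X)}(X)-\eta_{\tilde{Y}^*(X)}(X)\big].
\]
I then split the expectation over the partition of Definition~\ref{def:consistency_partition}. On $\mathcal{X}_{\text{correct}}$ the two predictions coincide, so the integrand vanishes and the gap is localized to $\mathcal{X}_{\text{error}}$. On $\mathcal{X}_{\text{error}}$ the predicted class $\tilde{Y}^*(x)$ differs from $Y^*(x)$. Its clean mass is therefore part of the non-optimal mass, giving $\eta_{\tilde{Y}^*(x)}(x)\le \delta(x)$. The integrand is then at least $(1-\delta(x))-\delta(x)=1-2\delta(x)$. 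It is also nonnegative, since $Y^*$ maximizes $\eta$. So it is at least $\max(0,1-2\delta(x))$. Conditioning on $\mathcal{X}_{\text{error}}$ produces the stated bound $P(\mathcal{X}_{\text{error}})\,\mathbb{E}_{X\mid\mathcal{X}_{\text{error}}}[\max(0,1-2\delta(X))]$, and the final $\ge 0$ is immediate.

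For part (b), the FC confidence is $C^f(x)=\max_k\eta_k(x)=1-\delta(x)$. Conditional on $C^f(X)=c$, the probability of correctness is $\mathbb{E}[\eta_{Y^*(X)}(X)\mid C^f(X)=c]=c$ by the tower property. The ECE integrand therefore vanishes identically, and $\mathrm{ECE}(f_{\fc})=0$.

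For NC, the confidence is $\max_k\eta^n_k(x)$ while the clean accuracy is $\eta_{\tilde{Y}^*(x)}(x)$. The main obstacle is strictness, $\mathrm{ECE}(f_{\nc})>0$, which does not hold in degenerate cases such as $T\equiv I$. I would therefore make explicit a nondegeneracy condition: $T(x)\neq I$ on a set of positive measure. On that set I would show pointwise miscalibration and then argue it survives averaging within level sets of the confidence.

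On $\mathcal{X}_{\text{correct}}$, write $j=Y^*(x)$. Diagonal dominance gives $\eta^n_j=\sum_i T_{ij}\eta_i$. I would compare $\eta^n_j$ with $\eta_j$ and show the confidence differs from the accuracy. In the typical regime, noise flattens the posterior so that $\eta^n_j<\eta_j$, giving underconfidence. The harder step is excluding exact cancellation after conditioning on $C$. I would either assume a directional bound, with $T$ strictly contracting toward the uniform distribution on the relevant set, or restrict to symmetric/CCN-type noise. There, $\eta^n_j=(1-\rho)\eta_j+\rho/K$ with $\eta_j>1/K$, so $\eta^n_j<\eta_j$ holds strictly with a consistent sign, and averaging cannot cancel it.

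I expect this last step to be the weakest point of the statement. I would present it under the stated nondegeneracy assumption, noting that part (a) needs no such assumption.
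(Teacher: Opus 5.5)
Part (a) is the same as the paper's argument. You use the pointwise accuracy $\eta_{g(x)}(x)$, note the cancellation on $\mathcal{X}_{\text{correct}}$, and bound $\eta_{\tilde{Y}^*(x)}(x)\le\min(\delta(x),1-\delta(x))$ on $\mathcal{X}_{\text{error}}$. Like the paper, you import $\hat{p}^*_{\fc}=\eta$. Note that this identification needs $T(x)$ to be invertible, and row-wise diagonal dominance does not guarantee that: the rows $(0.5,0.1,0.4)$, $(0.1,0.5,0.4)$, $(0.3,0.3,0.4)$ satisfy Assumption~\ref{assumption:diagonal_dominant} but form a singular matrix.

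For (b) your route differs from the paper's and is sounder.
\begin{itemize}
\item The paper rewrites $\mathrm{ECE}(f)$ as $\mathbb{E}_X[\Delta_{\text{cal}}(X)]$, where $\Delta_{\text{cal}}(x)=|P(Y=Y^f(x)\mid x)-\hat{p}_{Y^f(x)}(x)|$. This is only an upper bound. Applying Jensen on each level set of $C^f$ gives $\mathrm{ECE}(f)\le\mathbb{E}_X[\Delta_{\text{cal}}(X)]$, with equality only when the signed gap keeps one sign within each level set.
\item For FC the bound suffices, since $0\le\mathrm{ECE}\le 0$. Your tower-property argument reaches the same conclusion exactly.
\item For NC the paper argues that $\eta\neq\eta^n$ generically, so $\Delta^{\nc}_{\text{cal}}>0$ somewhere, hence $\mathrm{ECE}(f_{\nc})>0$. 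That step uses the false equality. It also conflates $\eta\neq\eta^n$ with a discrepancy in the predicted coordinate.
\end{itemize}
So the paper has no ingredient you are missing, and your suspicion is right: the strict claim is false as stated. Besides $T\equiv I$, consider the fixed, diagonally dominant, class-conditional matrix
\[
T=\begin{pmatrix}0.9&0.1\\0.4&0.6\end{pmatrix}
\]
with two inputs: $\eta=(0.3,0.7)$ with mass $7/12$, and $\eta=(0.1,0.9)$ with mass $5/12$. Then $\eta^n=(0.55,0.45)$ and $(0.45,0.55)$, so NC has confidence $0.55$ almost surely. Its accuracy is $\tfrac{7}{12}(0.3)+\tfrac{5}{12}(0.9)=0.55$, so $\mathrm{ECE}(f_{\nc})=0$.

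This example sharpens your plan in three ways.
\begin{itemize}
\item As you suspected, ``$T\neq I$ on a set of positive measure'' is insufficient.
\item The overconfident point lies in $\mathcal{X}_{\text{error}}$, which your NC sketch never treats. The flattening heuristic can also fail on $\mathcal{X}_{\text{correct}}$: with $T=\begin{pmatrix}1&0\\0.4&0.6\end{pmatrix}$ and $\eta=(0.6,0.4)$, NC predicts class 1 with confidence $0.76$ but accuracy $0.6$.
\item ``Symmetric/CCN-type'' overreaches, because class-conditional noise gives no consistent sign.
\end{itemize}
Restrict instead to symmetric noise, or more generally to $T(x)=(1-\rho(x))I+\rho(x)\mathbf{1}\mathbf{1}^\top/K$ with $\rho(x)<1$. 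There the argmax is preserved, so $\mathcal{X}_{\text{error}}=\emptyset$, and $C^f=\eta^n_{Y^*}\le\eta_{Y^*}$ pointwise. Hence $\mathrm{ECE}(f_{\nc})=\mathbb{E}[\eta_{Y^*(X)}(X)-C^f(X)]$. This is positive exactly when $\rho(X)>0$ and $\eta(X)$ is non-uniform on a common set of positive probability, and you should state that condition explicitly. Under the paper's parametrization $T_{jj}=1-\rho$, $T_{ij}=\rho/(K-1)$, the coefficient in your formula is $\rho K/(K-1)$, not $\rho$.
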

The proof is provided in Appendix \ref{app:proof_ideal_case}. \Cref{theorem:ideal_performance} provides critical insights that connect our theory to the empirical phenomena observed in \Cref{fig:ideal_failure}:

\paragraph{The ``Early Peak".} 
\Cref{theorem:ideal_performance} proves FC's superiority ($\Delta \ge 0$) in the ideal fitted state. This directly explains the "early peak" phenomenon observed in \Cref{fig:ideal_failure}. This peak aligns with the well-documented ``memorization effect,'' where models learn clean, generalizable patterns first. During this initial phase, the network's state approximates the ideal population minimizer ($f_{\fc}^*$). Since FC is provably superior in this state, it exhibits a clear advantage, successfully correcting the decision boundary before overfitting causes its performance to collapse.

\paragraph{The Role of Inherent Uncertainty.}
The accuracy gain $\Delta$ is structurally tied to the inherent uncertainty $\delta(X)$. As defined, the error set $\mathcal{X}_{\text{error}}$—where correction is necessary—is predominantly populated by samples with high aleatoric uncertainty (i.e., ambiguous, low-margin samples). This theoretical link directly explains the empirical disparity seen in \Cref{fig:ideal_failure}. CIFAR-100, with its more fine-grained classes, possesses higher inherent uncertainty ($\delta(X)$) than CIFAR-10. This implies a larger $\mathcal{X}_{\text{error}}$ region, which, according to our theorem, translates directly to a larger potential accuracy gain ($\Delta$). This perfectly matches the more pronounced improvement FC over NC on CIFAR-100 in the early peak.

\subsubsection{Analysis of the Empirical Overfitted Case}
\label{subsubsec:overfit_case}

We now pivot to the practical failure mode. Modern deep networks typically possess sufficient capacity to drive the empirical risk $\widehat{R}(f)$ to its global minimum, a phenomenon widely recognized as \textit{memorization}. 
To make the analysis of this state tractable, we adopt a simplifying analytical framework regarding the training data. We consider a hypothetical \emph{``full-support, single-label'' dataset}:
\begin{quote}
\textit{We assume the training set $S_n$ covers the input space $\mathcal{X}$ (full support) and contains exactly one sample pair $(x, y^n)$ for every $x$, where the noisy label $y^n$ is a single random draw from the conditional distribution $P(Y^n \mid X=x)$.}
\end{quote}

The implication of this "full-support" framework is that the global minimization decouples into $N$ independent, per-sample problems. We now analyze the target $\hat{p}(x)$ that minimizes the single-sample loss for $(x, y^n)$.

\begin{itemize}[leftmargin=*,topsep=0pt,parsep=0pt,itemsep=3pt]
    \item \textbf{No Correction (NC):} Minimizing the standard CE loss $\ell_{\nc} = -\log(\hat{p}_{y^n})$ trivially forces the model to memorize the noisy label: $\hat{p}_{\nc}(x) = \mathbf{e}_{y^n}$.
    
    \item \textbf{Forward Correction (FC):} The objective is to minimize $\ell_{\fc}$. As we rigorously show in Appendix~\ref{app:proof_fc_collapse}, this loss is minimized by collapsing the prediction to a hard vertex corresponding to the largest entry in the $y^n$-th column of $T(x)$:
    \begin{equation}
    \label{eq:fc_collapse}
    \hat{p}_{\fc}(x) = \mathbf{e}_{k^*_{\fc}(x)},
    \end{equation} 
    \end{itemize}
where $k^*_{\fc}(x) = \arg\max_k T_{k, y^n}(x)$.

We then formulate the statistical consequences of this collapse in the following theorem. Detailed proofs are provided in Appendix \ref{supp:proof_acc_ece_fitted}.

\begin{theorem}[Accuracy Trade-off and Solution Collapse under Memorization]
\label{theorem:macroscopic_tradeoff}
Consider the Empirical Overfitted Case where the empirical risk $\widehat{R}(f)$ is minimized.
\begin{enumerate}
\item[(a)] \textbf{Accuracy (ACC):} In the first-order approximation ($\varepsilon = \mathcal{O}(\mathbb{E}_X[\delta(X)])$), the accuracy of each method is determined by its ability to map the true label $Y^*$ correctly:
\begin{itemize}
\item $\mathrm{ACC}_{\nc} \approx \mathbb{E}_X \left[ (1 - \delta(X)) T_{Y^*,Y^*}(X) \right] + \varepsilon$.
\item $\mathrm{ACC}_{\fc} \approx \mathbb{E}_X \left[ (1 - \delta(X)) C_{Y^*}(X) \right] + \varepsilon$, 
\end{itemize}
where $C_{Y^*}(X) \triangleq \sum_{j: k^*(j)=Y^*} T_{Y^*,j}(X)$, $k^*(j) = \arg\max_k T_{kj}$.
\item[(d)] \textbf{Calibration (ECE):} The prediction confidence collapses to $\hat{c}=1$ everywhere, resulting in the ECE being perfectly coupled with accuracy: $\mathrm{ECE}_{\fc/\nc} = 1 - \mathrm{ACC}_{\fc/\nc}$.
\end{enumerate}
\end{theorem}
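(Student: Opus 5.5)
The plan is to exploit the decoupling given by the ``full-support, single-label'' framework. In the Empirical Overfitted Case, the prediction at each $x$ is a deterministic function of the single noisy draw $y^n$: $\hat{p}_{\nc}(x)=\mathbf{e}_{y^n}$ and, by \eqref{eq:fc_collapse}, $\hat{p}_{\fc}(x)=\mathbf{e}_{k^*(y^n)}$ with $k^*(j)=\arg\max_k T_{kj}(x)$. We first justify the latter briefly. Since $[T(x)^\top\hat{p}]_{y^n}=\sum_k T_{k,y^n}(x)\hat{p}_k$ is linear in $\hat{p}$ on the simplex, it is maximized at the vertex $\mathbf{e}_{k^*(y^n)}$, which is a limit of softmax outputs. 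Accuracy therefore becomes an expectation over $X$, over the clean label $Y\sim\eta(X)$, and over $Y^n\sim T_{Y,\cdot}(X)$, with the noisy draw conditionally independent of the evaluation label given $X$. We will use the tower rule:
\[
\mathrm{ACC}=\mathbb{E}_X\Big[\sum_{y}\eta_y(X)\sum_{j} P(Y^n=j\mid X)\,\mathbb{I}\big(g(j)=y\big)\Big],
\]
where $g(j)=j$ for NC and $g(j)=k^*(j)$ for FC.

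Next we perform the first-order expansion in $\delta$. We write $\eta(X)=(1-\delta(X))\mathbf{e}_{Y^*}+\delta(X)\,r(X)$, where $r(X)$ is a probability vector on the non-optimal classes. Then $P(Y^n=j\mid X)=(1-\delta)T_{Y^*,j}+\delta\,[T^\top r]_j$. Substituting, the leading term is obtained by keeping $Y^n\sim T_{Y^*,\cdot}$ and scoring against $Y^*$ with weight $(1-\delta)$. For NC this gives $(1-\delta)\sum_j T_{Y^*,j}\mathbb{I}(j=Y^*)=(1-\delta)T_{Y^*,Y^*}$. For FC it gives $(1-\delta)\sum_{j:k^*(j)=Y^*}T_{Y^*,j}=(1-\delta)C_{Y^*}$. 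Every remaining term carries at least one factor of $\delta(X)$ multiplied by quantities bounded in $[0,1]$ (entries of $T$, of $r$, and indicators). Their total is therefore bounded by a constant times $\mathbb{E}_X[\delta(X)]$, which is exactly the $\varepsilon=\mathcal{O}(\mathbb{E}_X[\delta(X)])$ term in the statement.

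The main obstacle is bookkeeping rather than depth. We must be careful that the evaluation label and the training label $y^n$ are separate draws, so the cross terms factor as stated. We must also fix a tie-breaking convention for $k^*$; ties have no effect on the bound. Diagonal dominance (Assumption~\ref{assumption:diagonal_dominant}) gives $k^*(Y^*)=Y^*$, hence $C_{Y^*}\ge T_{Y^*,Y^*}$. This is a useful remark, though it is not needed for the approximate identities themselves.

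For part (d), both memorized predictors are one-hot, so $C^f(X)=1$ almost surely. The ECE definition then reduces to a single conditioning event, $\mathrm{ECE}=|P(Y=Y^f(X))-1|=1-\mathrm{ACC}$, for both methods.
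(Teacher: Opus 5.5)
Your proof is correct for the first-order claim, but it rests on a different model of the evaluation label than the paper's. The paper couples the two labels: $Y^n$ is the corruption of the very clean label $Y$ being scored. This gives the exact forms $\mathrm{ACC}_{\nc}(x)=P(Y^n=Y\mid x)=\sum_k\eta_k(x)T_{k,k}(x)$ and $\mathrm{ACC}_{\fc}(x)=P(k^*(Y^n)=Y\mid x)=\sum_k\eta_k(x)C_k(x)$. The paper then peels off the $k=Y^*$ summand and bounds the rest in $[0,\delta(x)]$.

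You instead take the test label to be a fresh draw, conditionally independent of the memorized $y^n$ given $X$. Your exact accuracies are therefore $\sum_y\eta_y\eta^n_y$ and $\sum_y\eta_y\sum_{j:k^*(j)=y}\eta^n_j$, and you recover the same leading terms by perturbing $\eta^n$ around the row $T_{Y^*,\cdot}$. The two models coincide when $\delta(x)=0$, since then $Y=Y^*$ almost surely. That is why an $\mathcal{O}(\mathbb{E}_X[\delta(X)])$ statement cannot distinguish them.

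Each route buys something different:
\begin{itemize}
\item Your reading follows the definition of $\mathrm{ACC}$ with a fresh $(X,Y)\sim\mathcal{D}$ more literally. Your remarks that the vertex is only a limit of softmax outputs, and that ties need a convention, are refinements the paper skips; it simply asserts a unique minimizer.
\item The paper's coupling buys closed forms and one-sided bounds, $\mathrm{ACC}_{\nc}(x)\ge(1-\delta(x))T_{Y^*,Y^*}(x)$ and $\mathrm{ACC}_{\fc}(x)\ge(1-\delta(x))C_{Y^*}(x)$. These feed directly into its gain/loss decomposition.
\end{itemize}
In your version the remainder is signed, because the correction $\delta\,([T^\top r]_j-T_{Y^*,j})$ can be negative. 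So ``bounded in $[0,1]$'' should read ``bounded in absolute value by $1$''. The total error is then at most $2\delta(x)$ in absolute value, which is all the $\mathcal{O}$ claim needs. Part (d) matches the paper's argument.

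One side remark is false and should be dropped. Assumption~\ref{assumption:diagonal_dominant} is row-wise ($T_{ii}>T_{ij}$ for $j\neq i$), whereas $k^*(j)=\arg\max_k T_{kj}$ is a column argmax. So neither $k^*(Y^*)=Y^*$ nor $C_{Y^*}\ge T_{Y^*,Y^*}$ follows. The paper's own $3\times3$ example in the appendix is row-dominant yet has $k^*(2)=1$ and $k^*(3)=2$, giving $C_3=0<T_{3,3}=0.34$. This is exactly the ``Loss'' term in the paper's trade-off, so your remark would contradict the point the theorem is meant to support. Since you marked it as unnecessary, the proof itself is unaffected.
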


\paragraph{Confirmed Symmetric Collapse.}
To accurately analyze the accuracy gap $\Delta \mathrm{ACC}\triangleq \mathrm{ACC}_\fc - \mathrm{ACC}_\nc$ for any general instance-dependent $T$ is analytically intractable across the entire distribution. Though, under symmetric noise, the $T$-matrix structure ensures $C_{Y^*}(X) = T_{Y^*,Y^*}(X)$. This mathematically proves that the accuracy gap $\Delta \mathrm{ACC}  \approx 0$. This confirms the trajectory observed in \Cref{fig:ideal_failure}, where the FC and NC performance curves collapse to the same suboptimal level in the prolonged training finally.

\paragraph{Calibration Collapse.}
The perfect coupling $\mathrm{ECE} = 1 - \mathrm{ACC}$ suggests that in the overfitted state, the network is not only inaccurate but also \emph{maximally miscalibrated}. The loss function collapses the soft posterior into a deterministic, one-hot vector, providing the theoretical explanation for the extreme overconfidence visible in the experimental ECE plots.


\subsection{Microscopic Analysis: The Illusion of Gradient Softening}
\label{subsec:gradient_derivations}

The remaining challenge in understanding statistical consistency lies in the \emph{transient dynamics} between the ideal asymptotic state ($R \to R^*$) and the final overfitted state ($\hat{R} \to 0$). To illuminate this optimization trajectory, we analyze the per-sample gradients.

Let $f_k(\mathbf{x})$ be the logit for class $k$, inducing the prediction probability $\hat{p}_k = \exp(f_k(\mathbf{x})) / \sum_j \exp(f_j(\mathbf{x}))$. The gradients with respect to the logits $f_k(\mathbf{x})$ are derived as follows:

\begin{enumerate}
\item \textbf{No Correction (NC) Gradient:}
The standard Cross-Entropy loss $\ell_{\nc}$ yields the gradient that pushes the logits directly toward the observed noisy label:
\[
\frac{\partial \ell_{\nc}}{\partial f_k(\mathbf{x})} = \hat{p}_k - \mathbb{I}\{y^n = k\}.
\]
\item \textbf{Forward Correction (FC) Gradient:}
The FC loss $\ell_{\fc} = -\log [T(\mathbf{x})^\top \hat{p}(\mathbf{x})]_{y^n}$ yields the corrected gradient (detailed derivation in Appendix \ref{app:proof_fc_gradient}):
\begin{equation}\label{eq:fc_gradient}
    \frac{\partial \ell_{\fc}}{\partial f_k(\mathbf{x})} = \hat{p}_k - q_k,
\end{equation}
where $q_k = \frac{T_{k,y^n}(\mathbf{x}) \hat{p}_k}{\sum_{j} T_{j,y^n}(\mathbf{x}) \hat{p}_j}$. This term $q_k$ represents the estimated reverse posterior $P(Y=k \mid Y^n=y^n, \mathbf{x})$.
\end{enumerate}

\paragraph{FC `mitigating' overfitting?}
We analyze the gradient difference term $$\Delta_k = \frac{\partial \ell_{\nc}}{ \partial f_k} - \frac{\partial \ell_{\fc}}{\partial f_k} = q_k - \mathbb{I}\{y^n = k\}.$$

\begin{itemize}
\item \textbf{Mitigation:} When $k = y^n$ (the observed label), the FC gradient provides a \textit{weaker push} (less negative magnitude) compared to NC, thereby tempering the tendency to memorize the specific noisy label.
\item \textbf{Correction:} When $k \neq y^n$, FC allows classes with high likelihood (large $T_{k,y^n}(\mathbf{x})$) to compete for probability mass by \textit{reducing penalties} on alternative classes.
\end{itemize}
Collectively, FC's gradient analysis reveals a `softening effect' that guides the model toward the theoretically ideal posterior during intermediate training stages. This mechanism provides the further explanation for the `early peak' observed in \Cref{fig:ideal_failure}.

\paragraph{The Inevitable Collapse.}
The observed softening is, however, an illusion. While the gradients initially encourage a `benign' path, the final convergence behavior\footnote{Please refer to Appendix \ref{app:proof_gradient_flow} for a gradient perspective analysis on this.} is dictated by the global empirical minimums established in \Cref{theorem:macroscopic_tradeoff}.
The initial softening of FC is merely the \textit{transient dynamic} of the optimization path as it travels toward its unique, $T$-biased, and equally pathological hard-vertex attractor $\mathbf{e}_{k^*_{\fc}}$.

\begin{figure*}[t]
    \centering
    \begin{subfigure}{0.24\textwidth}
        \centering
        \includegraphics[width=\linewidth]{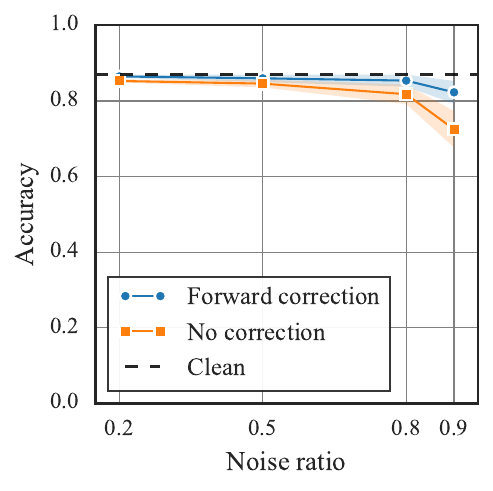}
        \caption{CIFAR-10 Accuracy}
        \label{fig:cifar10_self_acc}
    \end{subfigure}
    \begin{subfigure}{0.24\textwidth}
        \centering
        \includegraphics[width=\linewidth]{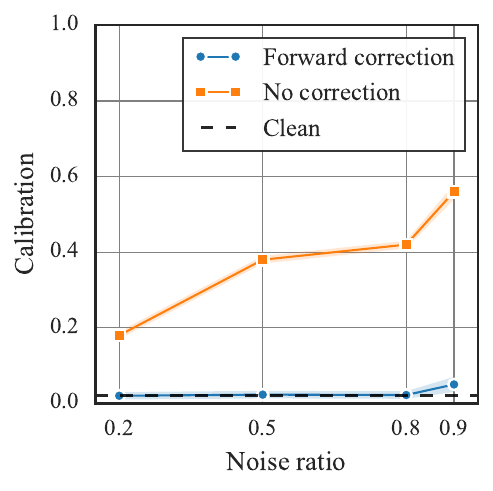}
        \caption{CIFAR-10 ECE}
        \label{fig:cifar10_self_ece}
    \end{subfigure}
    \begin{subfigure}{0.24\textwidth}
        \centering
        \includegraphics[width=\linewidth]{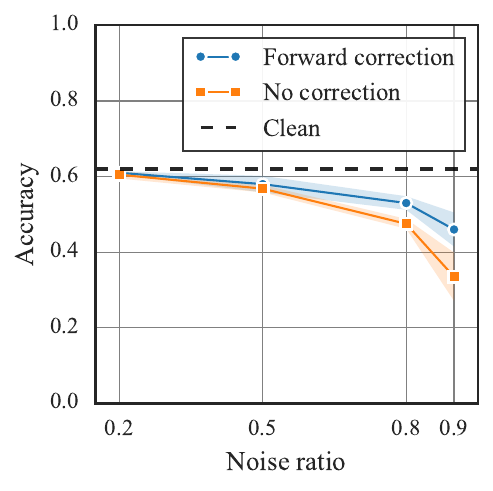}
        \caption{CIFAR-100 Accuracy}
        \label{fig:cifar100_self_acc}
    \end{subfigure}
    \begin{subfigure}{0.24\textwidth}
        \centering
        \includegraphics[width=\linewidth]{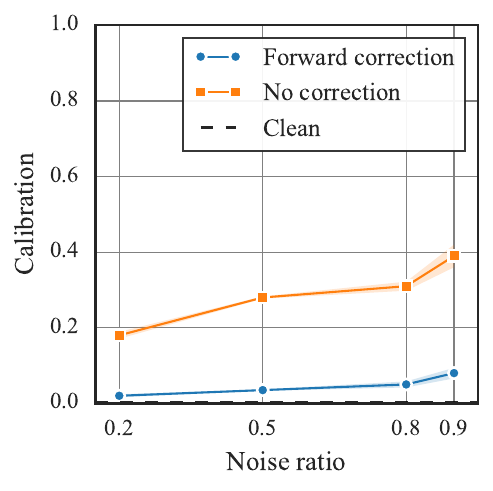}
        \caption{CIFAR-100 ECE}
        \label{fig:cifar100_self_ece}
    \end{subfigure}
    \caption{ACC and ECE comparison on CIFAR datasets under Ideal Fitted Case.}
    \label{fig:cifar_self}
\end{figure*}

\subsection{Fundamental Analysis: Information Cost of Label Noise}
\label{sec:information_cost}

Our analyses in \Cref{subsec:macroscopic_analysis} and \Cref{subsec:gradient_derivations} have dissected the specific failure modes of forward correction methods. However, these analyses focus on the properties of the correction \textit{estimators} only. We now step back to consider a more fundamental question: how does the noise process $T(x)$ impact the \textit{data itself}? 

This section adopts an information-theoretic perspective to demonstrate that the noise channel $T(x)$ inflicts a \textit{fundamental and universal penalty} by degrading the information content available per sample. This inherent information loss precedes any correction attempt and sets a baseline level of statistical difficulty for \textit{all} LNL strategies.

\subsubsection{Quantifying Information in a Single Clean Sample}
\label{subsec:info_clean}

To quantify the information a label provides about the underlying data-generating process, we adopt a standard setup to quantify the reduction in uncertainty regarding the true process. We consider a finite set of $M$ candidate hypotheses about the true clean posterior, denoted $\{\eta_m(x)\}_{m=1}^{M}$, drawn from the overall hypothesis space ($\mathcal{F}$). We assign a prior probability $\pi_m$ to each hypothesis $m$. For a fixed input $x$, the expected posterior under this prior is the mixture $\bar\eta(x) \triangleq \sum_{m=1}^{M}\pi_m\,\eta_m(x)$.

The information that a single clean label $Y$ at input $x$ reveals about the true underlying hypothesis $M$ is precisely the conditional mutual information $I(M;Y\mid X=x)$:
\begin{equation*}
\label{eq:js_identity_point}
I_{\mathrm{clean}}(x) \triangleq I(M;Y\mid X=x)
\end{equation*}
$I_{\mathrm{clean}}(x)$ therefore represents the \textit{maximum statistical signal} available from a clean label at instance $x$ for reducing our uncertainty across the hypothesis space.

\subsubsection{Information Contraction via the Noise Channel}
\label{subsec:info_contraction}

In the LNL setting, we observe a noisy label $Y^n$ instead of $Y$. The generation process follows the Markov chain:
$$
M\ \to\ (X,Y)\ \to\ (X,Y^n)
$$
The standard Data Processing Inequality (DPI) guarantees that information about $M$ cannot increase along this chain. Let $q_m(x) = T(x)^{\top}\eta_m(x)$ be the noisy posterior corresponding to hypothesis $m$. The conditional information obtainable from a noisy label is:
\[
I_{\mathrm{noisy}}(x) \triangleq I(M;Y^n\mid X=x) 
\]

\begin{theorem}[Fundamental Information Cost of Label Noise]
\label{thm:information_cost}
For any fixed input $X=x$, the information content available from the noisy label $Y^n$ is strictly less than or equal to that of the clean label $Y$:
\[
I_{\mathrm{noisy}}(x) \le I_{\mathrm{clean}}(x).
\]
If the noise channel $T(x)$ is non-trivial (neither Identify nor Permutation), the information contraction is strict, $I_{\mathrm{noisy}}(x) < I_{\mathrm{clean}}(x)$. Detailed proof in Appendix \ref{app:proof_information_cost}.
\end{theorem}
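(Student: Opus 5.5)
The plan is to write both quantities as generalized Jensen--Shannon divergences and compare them term by term with the log-sum inequality, which gives the weak inequality and characterizes the equality cases at once. Fix $x$ and suppress it. The clean label satisfies $P(Y=i\mid M=m)=\eta_m(i)$ with marginal $\bar\eta$. Hence
\[
I_{\mathrm{clean}}=\sum_m \pi_m\,\mathrm{KL}(\eta_m\,\|\,\bar\eta),
\]
and since $P(Y^n=j\mid M=m)=q_m(j)=\sum_i T_{ij}\eta_m(i)$ with mixture $\bar q=T^\top\bar\eta$,
\[
I_{\mathrm{noisy}}=\sum_m \pi_m\,\mathrm{KL}(q_m\,\|\,\bar q).
\]
The weak inequality $I_{\mathrm{noisy}}\le I_{\mathrm{clean}}$ is just the Data Processing Inequality for the Markov chain $M\to Y\to Y^n$ at fixed $x$. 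This chain holds because $T(x)$ does not depend on $m$.

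To get the equality cases, I will redo the weak inequality per hypothesis. For each $m$ and each noisy class $j$, apply the log-sum inequality to the nonnegative weights $a_i=T_{ij}\eta_m(i)$ and $b_i=T_{ij}\bar\eta(i)$:
\[
q_m(j)\log\frac{q_m(j)}{\bar q(j)}\le\sum_i T_{ij}\,\eta_m(i)\log\frac{\eta_m(i)}{\bar\eta(i)}.
\]
Summing over $j$ and using $\sum_j T_{ij}=1$ gives $\mathrm{KL}(q_m\|\bar q)\le \mathrm{KL}(\eta_m\|\bar\eta)$, and averaging over $\pi_m$ recovers the inequality. Equality holds if and only if, for every $m$ and every $j$, the likelihood ratio $\eta_m(i)/\bar\eta(i)$ is constant over the rows $\{i: T_{ij}>0\}$ of column $j$. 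Equivalently, $Y^n$ is a sufficient statistic for $M$.

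The strictness claim is the main obstacle, because as literally stated it is false without a non-degeneracy condition. If all hypotheses agree ($\eta_m\equiv\bar\eta$), both informations vanish whatever $T$ is. More generally, a non-permutation $T$ that only merges classes on which every $\eta_m$ has the same ratio to $\bar\eta$ loses nothing. So I would prove strictness under an explicit genericity assumption that makes ``non-trivial channel'' meaningful.

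The cleanest version assumes $T(x)$ has all entries positive, as for symmetric noise with positive flip rate. Then every column has support equal to all of $\mathcal Y$, so equality forces $\eta_m(i)/\bar\eta(i)$ to be constant in $i$. Since both sides are probability vectors, this gives $\eta_m=\bar\eta$ for all $m$ with $\pi_m>0$, i.e.\ $I_{\mathrm{clean}}=0$. Hence $I_{\mathrm{noisy}}<I_{\mathrm{clean}}$ whenever $I_{\mathrm{clean}}>0$.

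For a general non-identity, non-permutation $T$, some column $j$ contains two rows $i\neq i'$ with $T_{ij},T_{i'j}>0$. Strictness then follows provided some hypothesis distinguishes those classes, i.e.\ $\eta_m(i)/\bar\eta(i)\neq\eta_m(i')/\bar\eta(i')$ for some $m$. I would state this assumption explicitly in the appendix and remark that it holds for generic hypothesis families.
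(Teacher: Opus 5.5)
Your proposal is sound. On the weak inequality it reaches the paper's conclusion by a different mechanism. The paper applies the chain rule twice to $I(M;Y,Y^n\mid X=x)$. The Markov property removes $I(M;Y^n\mid Y,X=x)$, which leaves the exact identity $I_{\mathrm{clean}}(x)=I_{\mathrm{noisy}}(x)+I(M;Y\mid Y^n,X=x)$, and nonnegativity of the last term gives the inequality.

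You instead write both sides as $\pi$-weighted KL divergences to the mixture. You then prove the contraction $\mathrm{KL}(T^\top\eta_m\,\|\,T^\top\bar\eta)\le\mathrm{KL}(\eta_m\,\|\,\bar\eta)$ hypothesis by hypothesis with the log-sum inequality. The paper's route is shorter and names the gap exactly, but it gives no way to decide when that gap is positive. Yours yields an explicit equality criterion, which is precisely the condition $I(M;Y\mid Y^n,X=x)=0$. The criterion is constancy of $\eta_m(i)/\bar\eta(i)$ over the support of each column of $T(x)$. Strictly, it should be restricted to $m$ with $\pi_m>0$ and $i$ with $\bar\eta(i)>0$.

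That criterion is what settles strictness, and your skepticism there is justified. The paper simply asserts that $I(M;Y\mid Y^n,X=x)>0$ for every non-identity, non-permutation $T(x)$, and this is false. Besides your degenerate case ($\eta_m\equiv\bar\eta$, both sides zero), there are counterexamples with $I_{\mathrm{clean}}(x)>0$ and a diagonally dominant $T(x)$. Take $K=3$, let $T(x)$ fix class $1$ and flip $2\leftrightarrow 3$ with probability $0.4$. Let every $\eta_m$ have the form $(a_m,b_m,b_m)$, with the $\eta_m$ not all equal. Then $q_m=\eta_m$ for every $m$, so $I_{\mathrm{noisy}}(x)=I_{\mathrm{clean}}(x)>0$.

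So your conditional strictness results are the right repair, not a weakening of a provable claim. The first is: all entries of $T(x)$ positive and $I_{\mathrm{clean}}(x)>0$. The second is: a column with two positive entries on classes that some hypothesis distinguishes. Your observation that a non-permutation square stochastic matrix must have such a column is also correct. Otherwise its rows would have $K$ disjoint nonempty supports inside $K$ columns, forcing a permutation.
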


\paragraph{Synthesizing Information Cost and Optimization Failure.}
This information-theoretic analysis completes our theoretical framework. 
This \textit{information poverty} provides the final explanation for the LNL paradox. The model overfits not just because the loss function allows it, but because the data itself lacks enough information required to guide the optimization toward the correct, ideal solution $P(Y|X)$.

\begin{figure*}[ht]
    \centering
    \begin{subfigure}{0.49\textwidth}
        \centering
        \includegraphics[width=\linewidth]{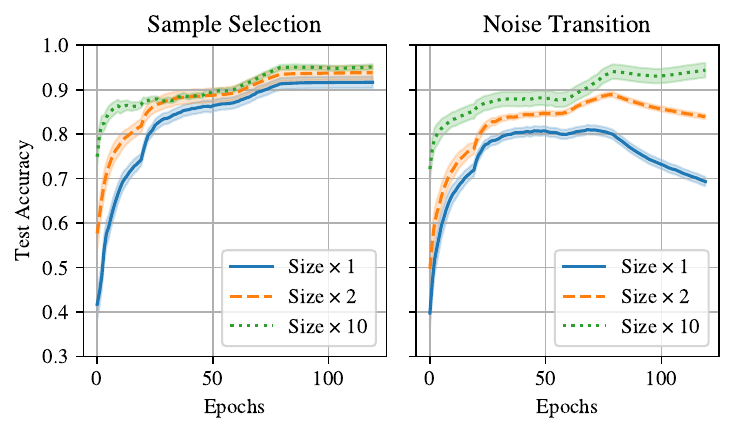}
        \caption{Accuracy}
        \label{fig:cifar10_acc_multiplied}
    \end{subfigure}
    \begin{subfigure}{0.49\textwidth}
        \centering
        \includegraphics[width=\linewidth]{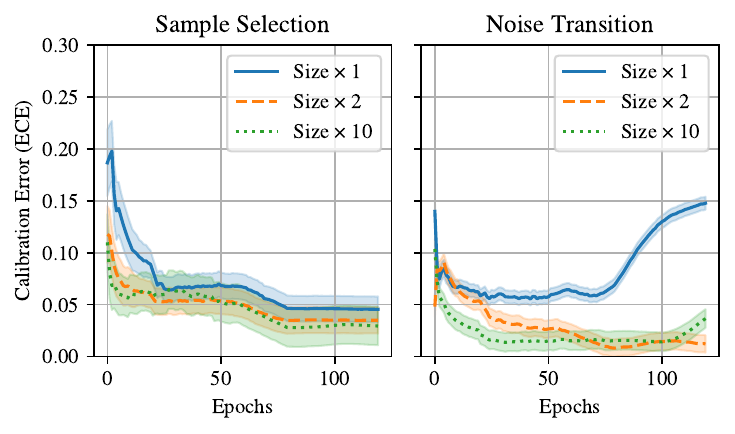}
        \caption{ECE}
        \label{fig:cifar10_ece_multiplied}
    \end{subfigure}

    \caption{Comparison of Accuracy and ECE for CIFAR-10 on multi-labeled dataset.}
    \label{fig:cifar_comparison}
\end{figure*}

\begin{table*}[tbp]
\centering
\begin{minipage}{0.6\textwidth}
\centering
\resizebox{\linewidth}{!}{
\begin{tabular}{lccccccccc}
\toprule
\textbf{Dataset} & \multicolumn{5}{c}{\textbf{CIFAR-10}} & \multicolumn{4}{c}{\textbf{CIFAR-100}} \\ \cmidrule(lr){2-6} \cmidrule(lr){7-10}
\textbf{Noise Type} & \multicolumn{4}{c}{Symmetric} & Asymmetric & \multicolumn{4}{c}{Symmetric} \\ \cmidrule(lr){2-5} \cmidrule(lr){6-6}\cmidrule(lr){7-10}
\textbf{Noise Ratio} & 20\% & 50\% & 80\% & 90\% & 40\% & 20\% & 50\% & 80\% & 90\% \\ \midrule
CE & 86.8 & 79.4 & 62.9 & 42.7 & 85.0 & 62.0 & 46.7 & 19.9 & 10.1 \\ \midrule
\rowcolor{lightgray!20}Coteaching~\citep{coteaching+} & 89.5 & 85.7 & 67.4 & 47.9 & - & 65.6 & 51.8 & 27.9 & 13.7 \\
\rowcolor{lightgray!20}PENCIL~\citep{pencil} & 92.4 & 89.1 & 77.5 & 58.9 & 88.5 & 69.4 & 57.5 & 31.1 & 15.3 \\
\rowcolor{lightgray!20}LossModel~\citep{lossmodellingbmm} & 94.0 & 92.0 & 86.8 & 69.1 & 87.4 & 73.9 & 66.1 & 48.2 & 24.3 \\
\rowcolor{lightgray!20}DivideMix~\citep{dividemix} & 96.1 & 94.6 & 93.2 & 76.0 & 93.4 & 77.3 & 74.6 & 60.2 & 31.5 \\
\midrule
Mixup~\citep{Mixup} & 95.6 &87.1 &71.6 &52.2 &67.8 &57.3 &30.8 &14.6\\
Forward~\citep{loss_correction} & 86.8 & 79.8 & 63.3 & 42.9 & 87.2 & 61.5 & 46.6 & 19.9 & 10.2 \\
FEC (Ours) & 88.1 & 87.3 & 85.6 & 82.5 & 80.2 & 60.5 & 58.6 & 52.7 & 44.1 \\
JEC (Ours) & 95.6& 88.8& 78.5& 68.5& 93.1& 73.1& 64.9& 50.1& 16.2\\ \bottomrule
\end{tabular}
}
\subcaption{Testing accuracy on CIFAR-10 and CIFAR-100 datasets. }
\label{tab:results}
\end{minipage}
\begin{minipage}{0.25\textwidth}
\centering
\resizebox{\linewidth}{!}{
\begin{tabular}{lc}
\toprule
\textbf{Method} & \textbf{Accuracy (\%)} \\ \midrule
CE & 69.03 \\
\rowcolor{lightgray!20}DivideMix~\citep{dividemix} & 74.76 \\
T-Rev~\cite{xia2019anchor} & 71.01 \\
DualT~\cite{yao2020dual} & 71.49 \\
Mixup~\citep{Mixup} & 71.29 \\
Forward~\citep{loss_correction} & 69.84 \\
FEC (Ours) & 61.85 \\
JEC (Ours) & 72.24 \\ \bottomrule
\end{tabular}
}
\subcaption{Testing accuracy on Clothing1M.}
\label{tab:clothing1m}
\end{minipage}
\caption{Experimental results across datasets.}
\label{tab:combined_results}
\end{table*}

\section{Experiments}
\label{sec:experiments}

This section moves from theoretical diagnosis to empirical validation, structured around three progressive objectives. We first confirm the superior performance of FC in the ideal state. Second, we investigate several methods to promote FC into its optimal generalization state. Finally, motivated by our information-theoretic findings, we conduct scaling experiments to verify the role of per-sample information in mitigating memorization failure. All specific experimental details are provided in Appendix \ref{supp:experiments_details}.

\subsection{Validating Ideal State}
\label{subsec:exp_paradox}

We first validate the theoretical benefits of noise correction in the Ideal Fitted Case. To \textit{approximate} the ideal functional form ($f^*$), we train a linear classifier atop a pretrained encoder, which reduces model complexity while providing robust features.

As shown in \Cref{fig:cifar_self}, the accuracy improvement for FC is clear, particularly at high noise ratios. We attribute this increased accuracy gap to the larger proportion of samples falling into the theoretically derived $\mathcal{X}_{\text{error}}$ region, making boundary correction essential. More importantly, the steady and significant improvement in calibration (ECE), compared to the accuracy gap, directly validates our theory. This confirms that the true benefits of consistency lie in posterior quality, necessitating performance evaluation that extends beyond conventional accuracy metrics.

\subsection{An Ideal-State Forcing Framework}
\label{subsec:exp_solution}

Having diagnosed the structural failure of Forward Correction (FC), we now introduce a regularized framework designed to motivate FC into its ideal state. Motivated by literature on model generalization and robust feature learning, we augment noise correction with two lightweight techniques: model pretraining and data interpolation.
Specifically, we utilize a self-supervised model pretraining and Mixup augmentations. We consider two modes:
\begin{itemize}
\item \textit{FEC (Feature-Enhanced Correction):} Train a linear classifier on top of a frozen pretrained encoder combined with Mixup and FC.
\item \textit{JEC (Joint-Enhanced Correction):} Jointly train the linear classifier with the pretrained encoder, also combined with Mixup and FC.
\end{itemize}

We evaluate these two solutions on both CIFAR and the Clothing1M datasets, with results shown in \cref{tab:results} and \cref{tab:clothing1m}. Historically, noise correction methods have often avoided direct comparison with advanced sample selection methods on mainstream benchmarks. However, we observe that by simply integrating these two lightweight strategies, our new solutions successfully push noise correction toward its ideal state, demonstrating performance that is \textit{fully competitive with} their sample selection counterparts. This confirms the great potential of noise correction methods, arguing that their empirical failure is not inherent to the approach but rather stems from a lack of effective optimization regularization.

\subsection{Noise Correction vs. Sample Selection}
\label{subsec:exp_fundamental_comparison}

In this final section, we assess the fundamental potential of noise correction methods against sample selection techniques. For a fair comparison, we operate under ideal conditions: sample selection is assumed to perfectly identify all clean samples, while noise correction is given the oracle $T$. This enables us to assess the upper theoretical limit of classification accuracy for both approaches.

To verify the role of data informativeness as suggested in \Cref{thm:information_cost}, we scale the dataset's information by scaling from single-label to multi-label samples. Results reported in \Cref{fig:cifar_comparison} confirm that as dataset information scales (from 1-label to 10-labels per sample), the ACC of noise correction method steadily improves, approaching that of ideal sample selection. Furthermore, the ECE for noise correction methods is \textit{markedly lower}, underscoring their distinct advantage in retaining predictive confidence.

\section{Conclusion}
\label{sec:conclusion}

This work rigorously resolves the long-standing paradox about statistically-consistent noise correction methods: their theoretical elegance does not translate into empirical superiority. We proves this failure is not due to estimation error, but rather a structural incompatibility between finite samples and high-capacity optimization. Through a three-pillar analysis, we found that deep networks inevitably collapse into a pathological, T-biased hard vertex. This failure mode is accelerated by a fundamental reduction in sample information caused by the noise channel.

Empirical validation confirmed our findings. We demonstrated that a simple regularization framework successfully promotes Forward Correction (FC) to its ideal generalization state, achieving performance competitive with state-of-the-art sample selection methods. This indicates a paradigm shift in LNL: moving from over-refining the noise matrix $T$ to jointly designing robust losses and optimizers. Beyond label noise, our insights into structural robustness also offer potential benefits for broader applications~\cite{Zhi2025TFAR,zeng2025FSDrive,Feng2022adaptive,Feng2025PROSAC,Feng2026NoisyValid,Sun2024LAFS,yang2023re,yang2024backdoor,Ge2025MM,luo2024arena,luo2025agentmath}.

    \noindent\textbf{Acknowledgements.} Georgios Tzimiropoulos’s work was supported by UK Research and Innovation (UKRI) under the UK government’s Horizon Europe funding guarantee (Grant No. 10099264) and by the European Union under Horizon Europe Grant Agreement No. 101135800 (RAIDO).
{
    \small
    \bibliographystyle{ieeenat_fullname}
    \bibliography{main}
}

\clearpage
\setcounter{page}{1}
\maketitlesupplementary

\appendix
\setcounter{footnote}{0}   


\appendix

\section{Proofs for the Ideal Fitted Case (Theorem \ref{theorem:ideal_performance})}
\label{app:proof_ideal_case}

In this appendix, we provide the rigorous derivation for the performance of Forward Correction (FC) and No Correction (NC) in the ideal asymptotic limit ($N \to \infty$). We rely on the notation defined in the main text: $\eta(x) \triangleq P(Y|X=x)$ denotes the clean posterior, and $\eta^n(x) \triangleq P(Y^n|X=x)$ denotes the noisy posterior. Recall the definition of inherent uncertainty: $\delta(x) \triangleq 1 - \max_k \eta_k(x)$.

\subsection{Proof of Theorem \ref{theorem:ideal_performance}(a): Accuracy Analysis}
\label{app:proof_acc_ideal}

\begin{proof}
We analyze the expected accuracy for both methods and derive the performance gap $\Delta$.

\paragraph{1. Forward Correction (FC).}
Under the ideal fitted assumption, FC is statistically consistent. It successfully recovers the clean posterior, i.e., $\hat{p}_{\fc}(x) = \eta(x)$. Consequently, the model's prediction $Y^f_{\fc}(x)$ coincides with the Clean Bayes-Optimal classifier $Y^*(x) = \arg\max_k \eta_k(x)$.
The accuracy is derived as:
\begin{align}
\mathrm{ACC}(f_{\fc}) &= \mathbb{E}_{(X,Y)} \left[ \mathbb{I}(Y=Y^f_{\fc}(X)) \right] \nonumber \\
&= \mathbb{E}_X \left[ \sum_{y \in \mathcal{Y}} P(Y=y|X) \cdot \mathbb{I}(y=Y^*(X)) \right] \nonumber \\
&= \mathbb{E}_X \left[ \eta_{Y^*(X)}(X) \right] \nonumber \\
&= \mathbb{E}_X \left[ 1-\delta(X) \right] = 1 - \mathbb{E}_X[\delta(X)].
\label{eq:acc_fc_final}
\end{align}
This confirms the first claim of Theorem \ref{theorem:ideal_performance}(a).

\paragraph{2. No Correction (NC).}
The NC objective minimizes the risk with respect to the noisy labels. The population minimizer yields the noisy posterior $\hat{p}_{\nc}(x) = \eta^n(x)$. Thus, the prediction follows the Noisy Bayes-Optimal classifier $\tilde{Y}^*(x) = \arg\max_k \eta^n_k(x)$.
Using the partition from Definition \ref{def:consistency_partition}, we decompose the accuracy via the Law of Total Expectation:
\begin{align}
\mathrm{ACC}(f_\nc) &= \mathbb{E}_X \left[ P(Y=\tilde{Y}^*(X) \mid X) \right] \nonumber \\
&= P(\mathcal{X}_{\text{correct}}) \cdot \mathbb{E}_{X|\mathcal{X}_{\text{correct}}} \left[ \eta_{\tilde{Y}^*(X)}(X) \right] 
+ P(\mathcal{X}_{\text{error}}) \cdot \mathbb{E}_{X|\mathcal{X}_{\text{error}}} \left[ \eta_{\tilde{Y}^*(X)}(X) \right].
\label{eq:acc_nc_split}
\end{align}
We analyze the term $\eta_{\tilde{Y}^*(X)}(X)$ in each regime:
\begin{itemize}
    \item \textbf{Regime 1 ($\mathcal{X}_{\text{correct}}$):} By definition, $\tilde{Y}^*(x) = Y^*(x)$. Thus, $\eta_{\tilde{Y}^*(x)}(x) = \eta_{Y^*(x)}(x) = 1-\delta(x)$.
    \item \textbf{Regime 2 ($\mathcal{X}_{\text{error}}$):} By definition, the predicted class is strictly suboptimal, i.e., $\tilde{Y}^*(x) = k_{err}$ where $k_{err} \neq Y^*(x)$.
\end{itemize}

\paragraph{3. The Accuracy Gap ($\Delta$).}
The gap is defined as $\Delta \triangleq \mathrm{ACC}(f_\fc) - \mathrm{ACC}(f_\nc)$. Decomposing $\mathrm{ACC}(f_\fc)$ similarly over the partition, the terms on $\mathcal{X}_{\text{correct}}$ are identical and cancel out. We are left with the difference on the error set:
\begin{equation}
\Delta = P(\mathcal{X}_{\text{error}}) \cdot \mathbb{E}_{X|\mathcal{X}_{\text{error}}} \left[ \eta_{Y^*(X)}(X) - \eta_{\tilde{Y}^*(X)}(X) \right].
\end{equation}
For any $x \in \mathcal{X}_{\text{error}}$, let $k_{err} = \tilde{Y}^*(x)$. The probability mass of this erroneous class, $\eta_{k_{err}}(x)$, is bounded by two constraints:
\begin{enumerate}
    \item It is sub-optimal: $\eta_{k_{err}}(x) \le \eta_{Y^*(x)}(x) = 1-\delta(x)$.
    \item It is part of the residual mass: $\eta_{k_{err}}(x) \le \sum_{k \neq Y^*} \eta_k(x) = \delta(x)$.
\end{enumerate}
Thus, $\eta_{k_{err}}(x) \le \min(\delta(x), 1-\delta(x))$. Substituting this into the gap equation:
\begin{align}
\eta_{Y^*(x)}(x) - \eta_{k_{err}}(x) &\ge (1-\delta(x)) - \min(\delta(x), 1-\delta(x)) \nonumber \\
&= \max(0, 1-2\delta(x)).
\end{align}
Therefore, the lower bound for the gap is:
\begin{equation}
\Delta \ge P(\mathcal{X}_{\text{error}}) \cdot \mathbb{E}_{X \mid \mathcal{X}_{\text{error}}}\Big[ \max\big(0, 1-2\delta(X)\big) \Big] \ge 0.
\end{equation}
This proves the non-negative gap and completes the proof for Part (a).
\end{proof}

\subsection{Proof of Theorem \ref{theorem:ideal_performance}(b): ECE Analysis}
\label{app:proof_ece_ideal}

\begin{proof}
Let $C(X) = \max_k \hat{p}_k(X)$ be the prediction confidence and $Y^f(X) = \arg\max_k \hat{p}_k(X)$ be the predicted label. The Expected Calibration Error (ECE) is defined as:
\begin{equation}
\mathrm{ECE}(f) = \mathbb{E}_C \left[ \left| P(Y=Y^f(X) \mid C(X)=C) - C \right| \right].
\end{equation}
By defining the \textit{per-sample calibration gap} as $\Delta_{\text{cal}}(x) \triangleq | P(Y=Y^f(x)|x) - \hat{p}_{Y^f(x)}(x) |$, we can rewrite the ECE as the expected local calibration error:
\begin{equation}
\mathrm{ECE}(f) = \mathbb{E}_X \left[ \Delta_{\text{cal}}(X) \right].
\end{equation}

\paragraph{1. Forward Correction (FC).}
In the ideal case, $\hat{p}_{\fc}(x) = \eta(x)$.
The confidence is $C(x) = \eta_{Y^*(x)}(x)$. The true accuracy of this prediction is $P(Y=Y^*(x)|x) = \eta_{Y^*(x)}(x)$.
Since the model output matches the ground truth posterior, the gap vanishes:
\begin{equation}
\Delta_{\text{cal}}^{\fc}(x) = | \eta_{Y^*(x)}(x) - \eta_{Y^*(x)}(x) | = 0 \implies \mathrm{ECE}(f_{\fc}) = 0.
\end{equation}

\paragraph{2. No Correction (NC).}
In the ideal case, $\hat{p}_{\nc}(x) = \eta^n(x)$.
The model's confidence is derived from the noisy posterior: $C(x) = \eta^n_{\tilde{Y}^*(x)}(x)$.
However, the true correctness of the prediction depends on the clean posterior: $P(Y=\tilde{Y}^*(x)|x) = \eta_{\tilde{Y}^*(x)}(x)$.
The calibration gap is:
\begin{equation}
\Delta_{\text{cal}}^{\nc}(x) = \left| \eta_{\tilde{Y}^*(x)}(x) - \eta^n_{\tilde{Y}^*(x)}(x) \right|.
\end{equation}
Unless the transition matrix $T(x)$ is the identity (no noise) or a specific permutation that preserves diagonal dominance exactly, generally $\eta(x) \neq \eta^n(x)$. Thus, $\Delta_{\text{cal}}^{\nc}(x) > 0$ for some $x$, implying $\mathrm{ECE}(f_{\nc}) > 0$.
\end{proof}

\section{Derivation of the Empirical Minimizer for FC (Eq. \ref{eq:fc_collapse})}
\label{app:proof_fc_collapse}

In this section, we provide the formal derivation for the optimal solution $\hat{p}(x)$ that minimizes the single-sample Forward Correction (FC) loss. 

\paragraph{1. Optimization Problem.}
We analyze the failure mode where a high-capacity network memorizes the training set, driving the empirical risk $\widehat{R}(f) \to 0$. This implies minimizing the loss for each sample $(x, y^n)$ independently.

The single-sample FC loss for a prediction $\hat{p}(x)$ is:
\begin{equation}
    \ell_{\fc}(\hat{p}(x) \mid x, y^n) = -\log \left( \sum_{k=1}^K T_{k, y^n}(x) \cdot \hat{p}_k(x) \right).
\end{equation}
Our objective is to find the optimal probability vector $\hat{p}^*$ that minimizes this loss, subject to the constraint that $\hat{p}^*$ lies on the probability simplex $\Delta^{K-1}$:
\begin{equation}
\label{eq:app_optim_problem}
    \hat{p}^* = \arg\min_{\hat{p} \in \Delta^{K-1}} \ell_{\fc}(\hat{p} \mid x, y^n).
\end{equation}

\paragraph{2. Equivalent Linear Objective.}
The function $g(z) = -\log(z)$ is strictly monotonically decreasing for $z > 0$. Therefore, minimizing $\ell_{\fc}$ is mathematically equivalent to maximizing its argument:
\begin{equation}
\label{eq:app_optim_simplified}
    \hat{p}^* = \arg\max_{\hat{p} \in \Delta^{K-1}} \left( \sum_{k=1}^K T_{k, y^n}(x) \cdot \hat{p}_k \right).
\end{equation}

\paragraph{3. Solution via Linear Programming.}
For a fixed sample $(x, y^n)$, the transition probabilities $T_{k, y^n}(x)$ (which form the $y^n$-th column of $T(x)$) are constants. Let us define a constant vector $\mathbf{c} \in \mathbb{R}^K$ where each element $c_k = T_{k, y^n}(x)$.

The optimization problem simplifies to:
\begin{equation}
    \arg\max_{\hat{p} \in \Delta^{K-1}} \left( \mathbf{c}^\top \hat{p} \right).
\end{equation}
This is a \textbf{Linear Program (LP)}: we are maximizing a linear objective function ($\mathbf{c}^\top \hat{p}$) over a convex polytope (the probability simplex $\Delta^{K-1}$).

\paragraph{4. Optimal Solution at Vertex.}
A fundamental theorem of linear programming states that the maximum of a linear function over a convex polytope must be achieved at one of the polytope's vertices.
The vertices of the probability simplex $\Delta^{K-1}$ are the set of standard basis vectors (one-hot vectors):
\[
    \mathcal{V} = \{ \mathbf{e}_1, \mathbf{e}_2, \dots, \mathbf{e}_K \}.
\]
To find the optimal solution, we evaluate the objective function at each vertex $\mathbf{e}_j$:
\[
    \mathbf{c}^\top \mathbf{e}_j = \sum_{k=1}^K c_k \cdot (\mathbf{e}_j)_k = c_j = T_{j, y^n}(x).
\]
The objective is maximized by choosing the vertex $\mathbf{e}_{k^*}$ that corresponds to the largest coefficient $c_{k^*}$. We define this optimal index $k^*$ as:
\[
    k_{\fc}^*(x) \triangleq \arg\max_{k \in \{1, \dots, K\}} T_{k, y^n}(x).
\]
Therefore, the unique optimal solution that minimizes the single-sample FC loss is:
\[
    \hat{p}^* = \mathbf{e}_{k_{\fc}^*(x)}.
\]
This completes the derivation of \Cref{eq:fc_collapse}.

\paragraph{Implications.}
This derivation formally proves that the empirical minimizer of the FC loss is \textit{not} the ``correct" soft posterior $\eta(x)$. Instead, the loss function creates an optimization landscape whose global minimum (for a finite sample) is a hard, one-hot vector. 

\section{Proofs for the Empirical Overfitted Case (Theorem \ref{theorem:macroscopic_tradeoff})}
\label{supp:proof_acc_ece_fitted}

In this appendix, we provide the detailed derivations for the accuracy and calibration properties in the ``Empirical Overfitted Case" (\Cref{subsubsec:overfit_case}). We assume the model has memorized the training data, collapsing its predictions to one-hot vectors as derived in \Cref{app:proof_fc_collapse}.

We use the notation $\eta_k(x) \triangleq P(Y=k \mid X=x)$ for the clean posterior and $\delta(x) \triangleq 1 - \eta_{Y^*}(x)$ for the inherent uncertainty. The accuracy is $\mathrm{ACC} = \mathbb{E}_X [ P(Y^f = Y \mid X) ]$.

\subsection{Proof of Theorem \ref{theorem:macroscopic_tradeoff}(a): Accuracy (ACC)}

We first derive the exact conditional accuracy $\mathrm{ACC}(x) = P(Y^f = Y \mid X=x)$ for both methods.

\paragraph{No Correction (NC)}
The NC solution memorizes the noisy label, so $Y^f = Y^n$. The conditional accuracy is the probability that the observed noisy label matches the (unseen) true clean label.
\begin{align}
    \mathrm{ACC}_{\nc}(x) &= P(Y^f = Y \mid x) = P(Y^n = Y \mid x) \nonumber \\
    &= \sum_{k \in \mathcal{Y}} P(Y^n = Y, Y=k \mid x) \nonumber \\
    &= \sum_{k \in \mathcal{Y}} P(Y^n = k, Y=k \mid x) \quad \text{(since } Y^n=Y \text{ implies } Y^n=k \text{ and } Y=k \text{)} \nonumber \\
    &= \sum_{k=1}^K P(Y^n=k \mid Y=k, x) P(Y=k \mid x) \nonumber \\
    &= \sum_{k=1}^K T_{k,k}(x) \eta_k(x) \label{eq:app_nc_exact_derived}
\end{align}
To analyze its dependence on $\delta(x)$, we split the sum into the dominant class ($k=Y^*$) and the residual $\mathcal{R}_{\nc}(x)$:
\begin{align}
    \mathrm{ACC}_{\nc}(x) &= \eta_{Y^*}(x) T_{Y^*,Y^*}(x) + \sum_{k \ne Y^*} \eta_k(x) T_{k,k}(x) \nonumber \\
    &= (1 - \delta(x)) T_{Y^*,Y^*}(x) + \mathcal{R}_{\nc}(x) \label{eq:app_nc_delta}
\end{align}
Since $0 \le T_{k,k}(x) \le 1$ and $\sum_{k \ne Y^*} \eta_k(x) = \delta(x)$, the residual is bounded: $0 \le \mathcal{R}_{\nc}(x) \le \delta(x)$. Thus, $\mathcal{R}_{\nc}(x) = \mathcal{O}(\delta(x))$. The dominant first-order approximation is:
\begin{equation}
    \mathrm{ACC}_{\nc}(x) \approx (1 - \delta(x)) T_{Y^*,Y^*}(x)
\end{equation}

\paragraph{Forward Correction (FC)}
From \Cref{app:proof_fc_collapse}, the FC prediction is $Y^f = k^*(Y^n)$, where $k^*(j) \triangleq \arg\max_k T_{k,j}(x)$. The conditional accuracy is the probability that this prediction matches the true label $Y$.
\begin{align}
    \mathrm{ACC}_{\fc}(x) &= P(Y^f = Y \mid x) = P(k^*(Y^n) = Y \mid x) \nonumber \\
    &= \sum_{k \in \mathcal{Y}} P(k^*(Y^n) = Y, Y=k \mid x) \nonumber \\
    &= \sum_{k \in \mathcal{Y}} P(Y=k \mid x) P(k^*(Y^n)=k \mid Y=k, x) \quad \text{(as } Y^n \text{ depends on } Y \text{)} \nonumber \\
    &= \sum_{k=1}^K \eta_k(x) \left( \sum_{j : k^*(j)=k} P(Y^n=j \mid Y=k, x) \right) \nonumber \\
    &= \sum_{k=1}^K \eta_k(x) \underbrace{\left( \sum_{j : k^*(j)=k} T_{k,j}(x) \right)}_{\triangleq C_k(x)} \label{eq:app_fc_exact_derived}
\end{align}
Let $C_k(x)$ be the sum of probabilities of transitions from class $k$ to any noisy label $j$ that maps back to $k$. The exact accuracy is $\mathrm{ACC}_{\fc}(x) = \sum_k \eta_k(x) C_k(x)$. We split this sum:
\begin{align}
    \mathrm{ACC}_{\fc}(x) &= \eta_{Y^*}(x) C_{Y^*}(x) + \sum_{k \ne Y^*} \eta_k(x) C_k(x) \nonumber \\
    &= (1 - \delta(x)) C_{Y^*}(x) + \mathcal{R}_{\fc}(x) \label{eq:app_fc_delta}
\end{align}
Since $C_k(x) = \sum_{j: \dots} T_{k,j}(x) \le \sum_j T_{k,j}(x) = 1$, the residual $\mathcal{R}_{\fc}(x)$ is also $\mathcal{O}(\delta(x))$. The first-order approximation is:
\begin{align}
    \mathrm{ACC}_{\fc}(x) \approx (1 - \delta(x)) C_{Y^*}(x) \label{eq:app_fc_approx}
\end{align}

\paragraph{Comparison: The Gain/Loss Trade-off}
The relative performance is driven by the difference between $C_{Y^*}(x)$ and $T_{Y^*,Y^*}(x)$. The first-order accuracy gap is:
\begin{equation}
    \Delta_{\mathrm{ACC}}(x) \approx (1 - \delta(x)) \left[ C_{Y^*}(x) - T_{Y^*,Y^*}(x) \right]
\end{equation}
To understand this gap, we decompose $C_{Y^*}(x)$ by splitting its sum at $j=Y^*$:
\begin{align}
    C_{Y^*}(x) &= \sum_{j: k^*(j)=Y^*} T_{Y^*,j}(x) \nonumber \\
    &= T_{Y^*,Y^*}(x) \cdot \mathbb{I}(k^*(Y^*) = Y^*) + \sum_{j \ne Y^*} T_{Y^*,j}(x) \cdot \mathbb{I}(k^*(j) = Y^*)
\end{align}
Substituting this into the gap equation $\left[ C_{Y^*}(x) - T_{Y^*,Y^*}(x) \right]$ yields:
\begin{align}
    &= \left[ T_{Y^*,Y^*}(x) \mathbb{I}(k^*(Y^*) = Y^*) + \sum_{j \ne Y^*} \dots \right] - T_{Y^*,Y^*}(x) \nonumber \\
    &= \sum_{j \ne Y^*} T_{Y^*,j}(x) \mathbb{I}(k^*(j) = Y^*) + T_{Y^*,Y^*}(x) \left( \mathbb{I}(k^*(Y^*) = Y^*) - 1 \right) \nonumber \\
    &= \underbrace{\sum_{j \ne Y^*} T_{Y^*,j}(x) \mathbb{I}(k^*(j) = Y^*)}_{\text{Gain: Errors corrected by FC}} - \underbrace{T_{Y^*,Y^*}(x) \mathbb{I}(k^*(Y^*) \ne Y^*)}_{\text{Loss: Correct labels mis-corrected by FC}} \label{eq:app_tradeoff}
\end{align}
This confirms that FC's performance is a fragile trade-off: it gains accuracy only if it correctly maps erroneous labels (like $j \ne Y^*$) back to $Y^*$, but it loses accuracy if it maps the \textit{correct} label $Y^*$ to something else ($k^*(Y^*) \ne Y^*$).

\paragraph{Intuition and Visualization of the Gain/Loss Trade-off.}
To build a strong intuition for the trade-off just derived in \Cref{eq:app_tradeoff}, we provide a concrete visualization of the collapsed solution $k^*(j) \triangleq \arg\max_{k} T_{k,j}(x)$. This is a deterministic lookup table found by scanning each \textbf{column $j$} of the $T$-matrix to find the \textbf{row $k$} with the maximum value.

Consider this 3-class ``pathological" but valid T-matrix (rows sum to 1, column max bolded):
\[
T(x) = \bordermatrix{
 & Y^n=\text{Bird (j=1)} & Y^n=\text{Dog (j=2)} & Y^n=\text{Cat (j=3)} \cr
Y=\text{Bird (k=1)} & \mathbf{0.5} & \mathbf{0.45} & 0.05 \cr
Y=\text{Dog (k=2)} & 0.25& 0.4 & \mathbf{0.35} \cr
Y=\text{Cat (k=3)} & 0.33 & 0.33 & 0.34
}
\]
The lookup table $k^*(j)$ becomes (based on the column-wise bolded maximums):
\begin{itemize}
    \item $k^*(1) = \arg\max(0.5, 0.25, 0.33) = 1$ (Observed `Bird' $\to$ Predict `Bird')
    \item $k^*(2) = \arg\max(0.45, 0.4, 0.33) = 1$ (Observed `Dog' $\to$ Predict `Bird')
    \item $k^*(3) = \arg\max(0.05, 0.35, 0.34) = 2$ (Observed `Cat' $\to$ Predict `Dog')
\end{itemize}

This pathological mapping $j \to k^*(j)$ directly explains the Gain/Loss terms from \Cref{eq:app_tradeoff}:

\begin{itemize}
    \item \textbf{Loss Term Activation ($\mathbb{I}(k^*(Y^*) \ne Y^*)$):}
    This term activates when a ``correct" noisy label is ``mis-corrected". For example, if the true label is $Y^*=2$ (`Dog') and the noisy label is also $Y^n=2$ (a correct pair), the NC baseline would be correct. However, the FC model uses its lookup table, finds $k^*(2)=1$, and predicts `Bird', thus \textbf{creating a loss}. The same occurs for $Y^*=3$ (`Cat'), where the correct label $Y^n=3$ is mapped to $k^*(3)=2$.
    
    \item \textbf{Gain Term Activation ($\mathbb{I}(k^*(j) = Y^*)$ for $j \ne Y^*$):}
    This term activates when an ``incorrect" noisy label is ``corrected". For example, if the true label is $Y^*=1$ (`Bird') but the noisy label is $Y^n=2$ (`Dog') (an error pair), the NC baseline would be wrong. However, the FC model finds $k^*(2)=1$, which matches $Y^*=1$, thus \textbf{creating a gain}. The same occurs for $Y^*=2$ (`Dog') when $Y^n=3$ (`Cat'), as $k^*(3)=2$.
\end{itemize}
Therefore, the performance of the overfitted FC model depends on the fragile balance between these gain and loss terms, which are dictated entirely by the T-matrix structure.

\paragraph{Symmetric Noise Case.}
Now we apply this framework to the symmetric noise case. With rate $\rho < (K-1)/K$, the T-matrix is constant, and $T_{j,j} = 1-\rho > T_{i,j} = \rho/(K-1)$ for $i \ne j$.
This means the maximum of every column $j$ is strictly on the diagonal. Therefore:
\[
    k^*(j) = j \quad \text{for all } j.
\]
We plug this into the Gain/Loss equation (\Cref{eq:app_tradeoff}):
\begin{itemize}
    \item \textbf{Gain Term:} $\sum_{j \ne Y^*} T_{Y^*,j} \mathbb{I}(j = Y^*) = 0$ (the sum is over an empty set).
    \item \textbf{Loss Term:} $T_{Y^*,Y^*} \mathbb{I}(Y^* \ne Y^*) = 0$.
\end{itemize}
The first-order accuracy gap is zero: $\mathrm{ACC}_{\fc}(x) \approx \mathrm{ACC}_{\nc}(x)$.
The total accuracy for both methods collapses to:
\[
\mathrm{ACC} \approx \mathbb{E}_X \left[ (1 - \delta(x)) (1-\rho) \right] = (1-\rho)(1-\mathbb{E}[\delta(X)])
\]
This analytically confirms the ``solution collapse" to the same suboptimal baseline observed in \Cref{fig:ideal_failure}.

\subsection{Proof of Theorem \ref{theorem:macroscopic_tradeoff}(b): ECE}

\begin{proof}
For both NC and FC, the overfitted solution is a one-hot vector (e.g., $\hat{p} = \mathbf{e}_{k^*}$).
The prediction confidence is therefore $C(X) = \max_k \hat{p}_k(X) = 1$ for all samples.
The ECE formula is defined as $\mathrm{ECE}(f) = \mathbb{E}_C \left[ \left| P(Y=Y^f \mid C) - C \right| \right]$.
Since $C=1$ everywhere, the expectation simplifies to a single point:
\begin{align}
    \mathrm{ECE}(f) &= \left| P(Y=Y^f \mid C=1) - 1 \right| \nonumber \\
    &= \left| P(Y=Y^f) - 1 \right| \quad \text{(since } C=1 \text{ provides no new information)} \nonumber \\
    &= \left| \mathrm{ACC}(f) - 1 \right| \nonumber \\
    &= 1 - \mathrm{ACC}(f) \quad \text{(since } \mathrm{ACC}(f) \le 1 \text{)}
\end{align}
This proves that in the overfitted, hard-label regime, ECE is perfectly and negatively coupled with accuracy.
\end{proof}

\section{Gradient Derivation for Forward Correction (FC)}
\label{app:proof_fc_gradient}

In this section, we derive the gradient of the Forward Corrected loss $\ell_{\fc}$ with respect to a single logit $f_k(x)$, as referenced in \Cref{subsec:gradient_derivations}.

The loss for a single sample $(x, y^n)$ is defined as:
\begin{equation}
    \ell_{\fc} = -\log(z_{y^n})
\end{equation}
where $z$ is the model's predicted noisy posterior vector, $z = T(x)^\top \hat{p}(x)$, and $\hat{p}(x) = \text{softmax}(f(x))$. The term $z_{y^n}$ is the single component corresponding to the observed noisy label $y^n$:
\[
z_{y^n} = \sum_{i=1}^K T_{i, y^n}(x) \hat{p}_i(x)
\]
We use the chain rule to compute $\frac{\partial \ell_{\fc}}{\partial f_k}$:
\[
    \frac{\partial \ell_{\fc}}{\partial f_k} = \frac{\partial \ell_{\fc}}{\partial z_{y^n}} \cdot \frac{\partial z_{y^n}}{\partial f_k}
\]

\paragraph{Step 1: Derivative of Loss w.r.t. $z_{y^n}$}
The derivative of the negative logarithm is:
\[
    \frac{\partial \ell_{\fc}}{\partial z_{y^n}} = -\frac{1}{z_{y^n}}
\]

\paragraph{Step 2: Derivative of $z_{y^n}$ w.r.t. logit $f_k$}
We apply the chain rule again, summing over all clean probabilities $\hat{p}_j$:
\begin{align*}
    \frac{\partial z_{y^n}}{\partial f_k} &= \sum_{j=1}^K \frac{\partial z_{y^n}}{\partial \hat{p}_j} \cdot \frac{\partial \hat{p}_j}{\partial f_k} \\
    &= \sum_{j=1}^K T_{j, y^n}(x) \cdot \frac{\partial \hat{p}_j}{\partial f_k}
\end{align*}
We use the standard softmax derivative $\frac{\partial \hat{p}_j}{\partial f_k} = \hat{p}_j (\delta_{jk} - \hat{p}_k)$, where $\delta_{jk}$ is the Kronecker delta.
\begin{align*}
    \frac{\partial z_{y^n}}{\partial f_k} &= \sum_{j=1}^K T_{j, y^n} \hat{p}_j (\delta_{jk} - \hat{p}_k) \\
    &= \underbrace{T_{k, y^n} \hat{p}_k (1 - \hat{p}_k)}_{\text{Case } j=k} + \underbrace{\sum_{j \ne k} T_{j, y^n} \hat{p}_j (-\hat{p}_k)}_{\text{Case } j \ne k} \\
    &= (T_{k, y^n} \hat{p}_k - T_{k, y^n} \hat{p}_k^2) - \hat{p}_k \sum_{j \ne k} T_{j, y^n} \hat{p}_j \\
    &= T_{k, y^n} \hat{p}_k - \hat{p}_k \left( T_{k, y^n} \hat{p}_k + \sum_{j \ne k} T_{j, y^n} \hat{p}_j \right) \\
    &= T_{k, y^n} \hat{p}_k - \hat{p}_k \left( \sum_{j=1}^K T_{j, y^n} \hat{p}_j \right) \\
    &= T_{k, y^n} \hat{p}_k - \hat{p}_k z_{y^n}
\end{align*}

\paragraph{Step 3: Combining the Terms}
We substitute the results from Step 1 and Step 2 back into the main chain rule formula:
\begin{align}
    \frac{\partial \ell_{\fc}}{\partial f_k} &= \left( -\frac{1}{z_{y^n}} \right) \cdot (T_{k, y^n} \hat{p}_k - \hat{p}_k z_{y^n}) \nonumber \\
    &= -\frac{T_{k, y^n} \hat{p}_k}{z_{y^n}} + \frac{\hat{p}_k z_{y^n}}{z_{y^n}} \nonumber \\
    &= \hat{p}_k - \frac{T_{k, y^n} \hat{p}_k}{z_{y^n}} \label{eq:app_grad_final}
\end{align}
Recalling the definition $z_{y^n} = \sum_{j} T_{j, y^n} \hat{p}_j$, we obtain the final form presented in \Cref{eq:fc_gradient}:
\begin{equation}
    \frac{\partial \ell_{\fc}}{\partial f_k(x)} = \hat{p}_k - q_k
\end{equation}
where $q_k = \frac{T_{k,y^n}(x) \hat{p}_k}{\sum_{j} T_{j,y^n}(x) \hat{p}_j}$. As noted in the main text, this $q_k$ term is precisely the model's estimated reverse posterior $P(Y=k \mid Y^n=y^n, x)$ via Bayes' rule.

This gradient form $\hat{p}_k - q_k$ should be contrasted with the standard Cross-Entropy gradient, $\hat{p}_k - \mathbb{I}\{y^n=k\}$. Instead of a hard `1' pulling the gradient for the observed class, FC uses a ``soft" target $q_k$ distributed over all classes $k$, which explains the ``gradient softening" effect discussed in the main paper.

\section{Analysis of Optimization Dynamics and Gradient Saturation}
\label{app:proof_gradient_flow}

In this appendix, we analyze the optimization dynamics of the Forward Correction (FC) loss. As shown in Appendix \ref{app:proof_fc_collapse}, the theoretical global minimum of the single-sample (overfitted) loss is not the clean label $\mathbf{e}_{y^*}$, but the 'collapsed' solution $\mathbf{e}_{k_{\fc}^*}$.
Here, we show here that the practical optimization path does not even guarantee convergence to this theoretical minimum.

Our analysis proceeds in two steps:
\begin{enumerate}
    \item \textbf{Global Gradient Flow:} We first analyze the vector field of the gradient $\nabla_{\mathbf{f}} \ell_{\fc}$. We confirm that the gradient flow, when viewed globally, does indeed point towards the correct theoretical minimum $\mathbf{e}_{k_{\fc}^*}$ (the solution from the Linear Program analysis in Appendix \ref{app:proof_fc_collapse}).
    \item \textbf{Local Gradient Saturation:} We then show that due to the Softmax parameterization, the gradient magnitude vanishes near \textit{all} simplex vertices. This creates 'dead zones' (local minima) around sub-optimal attractors, most notably the noisy label vertex $\mathbf{e}_{y^n}$.
\end{enumerate}
This analysis demonstrates that while the loss function's \textit{global} landscape points to $\mathbf{e}_{k_{\fc}^*}$, its \textit{local} properties trap the optimizer. The final `pseudo-converged' solution is therefore path-dependent, not guaranteed to be the theoretical optimum, and often defaults to the noisy vertex $\mathbf{e}_{y^n}$ due to early-learning dynamics.

\paragraph{Theoretical Optimum and Global Gradient Flow}
\label{app:sub_grad_flow}

As derived in Appendix \ref{app:proof_fc_collapse}, the FC loss for a sample $(x, y^n)$ is:
\begin{equation}
    \ell_{\fc}(\hat{\mathbf{p}}) = - \log \left( \sum_{k=1}^K \hat{p}_k T_{k, y^n} \right)
\end{equation}
Minimizing this is equivalent to the Linear Program $\max_{\hat{\mathbf{p}} \in \Delta^{K-1}} \sum_{k=1}^K \hat{p}_k T_{k, y^n}$. The global optimum $\hat{\mathbf{p}}^*$ is the one-hot vector $\mathbf{e}_{k_{\fc}^*}$, where $k_{\fc}^* = \arg\max_{k} T_{k, y^n}$.

An analysis of the gradient flow (as visualized on a 3-class simplex example in \cref{fig:gradient}) confirms this. The vector field of the gradient $\nabla_{\mathbf{f}} \ell_{\fc}$ globally points away from all other vertices and towards the single vertex $\mathbf{e}_{k_{\fc}^*}$. Theoretically, a gradient descent algorithm with perfect information should converge to $k_{\fc}^*$.

\paragraph{Gradient Saturation and Pseudo-Convergence}
\label{app:sub_grad_saturation}

The practical failure arises from the Softmax parameterization $\hat{p}_k = \text{softmax}(f_k)$. The gradient of the loss with respect to the logits $\mathbf{f}$ is:
\begin{equation}
    \frac{\partial \ell}{\partial f_k} = \sum_{j=1}^K \frac{\partial \ell}{\partial \hat{p}_j} \frac{\partial \hat{p}_j}{\partial f_k}
\end{equation}
The Softmax derivative $\frac{\partial \hat{p}_j}{\partial f_k} = \hat{p}_j (\delta_{jk} - \hat{p}_k)$ approaches 0 as $\hat{\mathbf{p}}$ approaches \textit{any} vertex $\mathbf{e}_i$.
Consequently, the magnitude of the logit gradient vanishes near all vertices:
\begin{equation}
    \lim_{\hat{\mathbf{p}} \rightarrow \mathbf{e}_i} \| \nabla_{\mathbf{f}} \ell_{\fc} \| \approx 0 \quad \text{for any } i \in \{1, \dots, K\}
\end{equation}
This phenomenon creates ``gradient plateaus" or ``dead zones" around \textit{every} vertex.

\paragraph{Dynamics Analysis: Trapped in a Local Minimum}
\label{app:sub_grad_dynamic}

The existence of these ``dead zones" means the final convergence point is path-dependent. In Noisy Label Learning, models famously exhibit an ``early learning" phase (fitting simple patterns) followed by a ``memorization" phase (fitting noisy labels).
\begin{enumerate}
    \item The model quickly learns to fit the dominant noisy labels, causing its prediction $\hat{\mathbf{p}}$ to approach the noisy vertex $\mathbf{e}_{y^n}$.
    \item As $\hat{\mathbf{p}} \to \mathbf{e}_{y^n}$, the model enters the gradient saturation ``dead zone" of this vertex.
    \item Although $\mathbf{e}_{y^n}$ is not the global minimum of $\ell_{\fc}$ (assuming $k_{\fc}^* \neq y^n$), the gradient pointing away from $\mathbf{e}_{y^n}$ and towards the true minimum $\mathbf{e}_{k_{\fc}^*}$ becomes infinitesimally small.
    \item SGD, with limited step size, fails to escape this local basin of attraction.
\end{enumerate}
This results in \textit{pseudo-convergence}: the model remains ``trapped" at the wrong label $\mathbf{e}_{y^n}$. Therefore, the actual overfitted solution observed in practice is not the theoretical optimum $\mathbf{e}_{k_{\fc}^*}$, but rather a sub-optimal local minimum $\mathbf{e}_{y^n}$ that acts as a strong attractor.

\section{Proof of \Cref{thm:information_cost}: Fundamental Information Cost}
\label{app:proof_information_cost}

We provide the rigorous proof for Theorem \ref{thm:information_cost}. We restate the definitions for a fixed input $X=x$:
\begin{itemize}
    \item $I_{\mathrm{clean}}(x) \triangleq I(M; Y \mid X=x)$
    \item $I_{\mathrm{noisy}}(x) \triangleq I(M; Y^n \mid X=x)$
\end{itemize}
Our goal is to prove that $I_{\mathrm{noisy}}(x) \le I_{\mathrm{clean}}(x)$.

\paragraph{1. Establishing the Conditional Markov Chain}
The data-generating process described in \Cref{sec:information_cost} forms a conditional Markov chain for any fixed $X=x$:
$$
M\ \to\ Y\ \to\ Y^n \quad \text{conditioned on } X=x.
$$
This holds because:
\begin{enumerate}
    \item The hypothesis $M$ (which represents the true data-generating process $\eta_m(x)$) determines the distribution for the clean label $Y$.
    \item The noisy label $Y^n$ is generated based *only* on the clean label $Y$ (via the noise channel $T(x)$), without any direct influence from $M$.
\end{enumerate}
Therefore, given $Y$, the noisy label $Y^n$ is conditionally independent of the hypothesis $M$. This conditional independence implies:
\begin{equation}
\label{eq:app_cond_indep}
    I(M; Y^n \mid Y, X=x) = 0.
\end{equation}

\paragraph{2. Applying the Chain Rule for Mutual Information}
We decompose the joint mutual information $I(M; Y, Y^n \mid X=x)$ in two different ways using the chain rule:

\begin{itemize}
    \item \textbf{Decomposition 1 (Grouping with Y):}
    \begin{align}
    I(M; Y, Y^n \mid X=x) &= I(M; Y \mid X=x) + \underbrace{I(M; Y^n \mid Y, X=x)}_{\text{Equals } 0 \text{ by \Cref{eq:app_cond_indep}}} \nonumber \\
    &= I(M; Y \mid X=x) \nonumber \\
    &= I_{\mathrm{clean}}(x) \label{eq:app_info_1}
    \end{align}

    \item \textbf{Decomposition 2 (Grouping with $Y^n$):}
    \begin{align}
    I(M; Y, Y^n \mid X=x) &= I(M; Y^n \mid X=x) + I(M; Y \mid Y^n, X=x) \nonumber \\
    &= I_{\mathrm{noisy}}(x) + I(M; Y \mid Y^n, X=x) \label{eq:app_info_2}
    \end{align}
\end{itemize}

\paragraph{3. Final Derivation}
By equating \Cref{eq:app_info_1} and \Cref{eq:app_info_2}, we have:
\[
I_{\mathrm{clean}}(x) = I_{\mathrm{noisy}}(x) + I(M; Y \mid Y^n, X=x)
\]
By the non-negativity property of mutual information, the final term must be non-negative:
\[
I(M; Y \mid Y^n, X=x) \ge 0
\]
Therefore, we conclude:
\[
I_{\mathrm{clean}}(x) \ge I_{\mathrm{noisy}}(x) \quad \text{or} \quad \mathbf{I_{\mathrm{noisy}}(x) \le I_{\mathrm{clean}}(x)}
\]
Furthermore, the inequality is strict, $I_{\mathrm{noisy}}(x) < I_{\mathrm{clean}}(x)$, if $I(M; Y \mid Y^n, X=x) > 0$. This holds for any non-trivial noise channel $T(x)$ (i.e., not an identity or permutation matrix), as the noisy label $Y^n$ becomes a statistically lossy proxy for the true label $Y$.

\section{Experiment Details}
\label{supp:experiments_details}

\subsection{Dataset Details}

\paragraph{CIFAR-10/CIFAR-100}
Both datasets consist of 50,000 training images. Following established conventions, we evaluate performance under three standard noise models:
\begin{itemize}
    \item \textbf{Symmetric Noise:} Labels are randomly flipped to any of the other classes with a uniform probability.
    \item \textbf{Asymmetric Noise:} Labels are flipped to mimic real-world mistakes between visually similar categories (e.g., Horse $\leftrightarrow$ Deer and Dog $\leftrightarrow$ Cat).
    \item \textbf{Instance-Dependent Noise (IDN):} To approximate IDN without the prohibitive cost of a unique transition matrix for every sample, we use a grouped setting. The dataset is divided into 50 groups, with each group being assigned a different, randomly generated, diagonally-dominant transition matrix $T$.
\end{itemize}
We test a comprehensive range of noise levels: 20\%, 50\%, 80\%, and 90\% for symmetric noise, and 40\% for asymmetric noise.

\paragraph{Clothing1M}
Clothing1M~\cite{clothing1mdataset} is a large-scale, real-world benchmark for LNL. It contains 1 million images in 14 classes, crawled from online shopping websites. The dataset has a substantial level of intrinsic noise, estimated at approximately 38.5\%.

\subsection{Implementation Details}

\paragraph{CIFAR-10/CIFAR-100}
We use a PreActResNet-18~\cite{preactresnet} backbone for all CIFAR experiments.
\begin{itemize}
    \item \textbf{Standard Training:} The network is trained for 120 epochs using SGD with a weight decay of 5e-4 and a batch size of 128. The initial learning rate is 0.02, decaying to 0.002 at 60 epochs and 0.0002 at 80 epochs.
    \item \textbf{Long Training (for \Cref{fig:ideal_failure}):} To observe the long-term collapse, we train for 800 epochs. The learning rate schedule is extended, decaying at 400 and 600 epochs.
\end{itemize}

\paragraph{Clothing1M}
Following the setup of~\cite{dividemix}, we use a ResNet-50 backbone pretrained on ImageNet. The network is trained for 150 epochs with a weight decay of 1e-3 and a batch size of 32. The initial learning rate is 0.002, decaying by a factor of 10 at 50 and 100 epochs.

\begin{figure}[tbp]
    \centering
\includegraphics[width=0.9\linewidth]{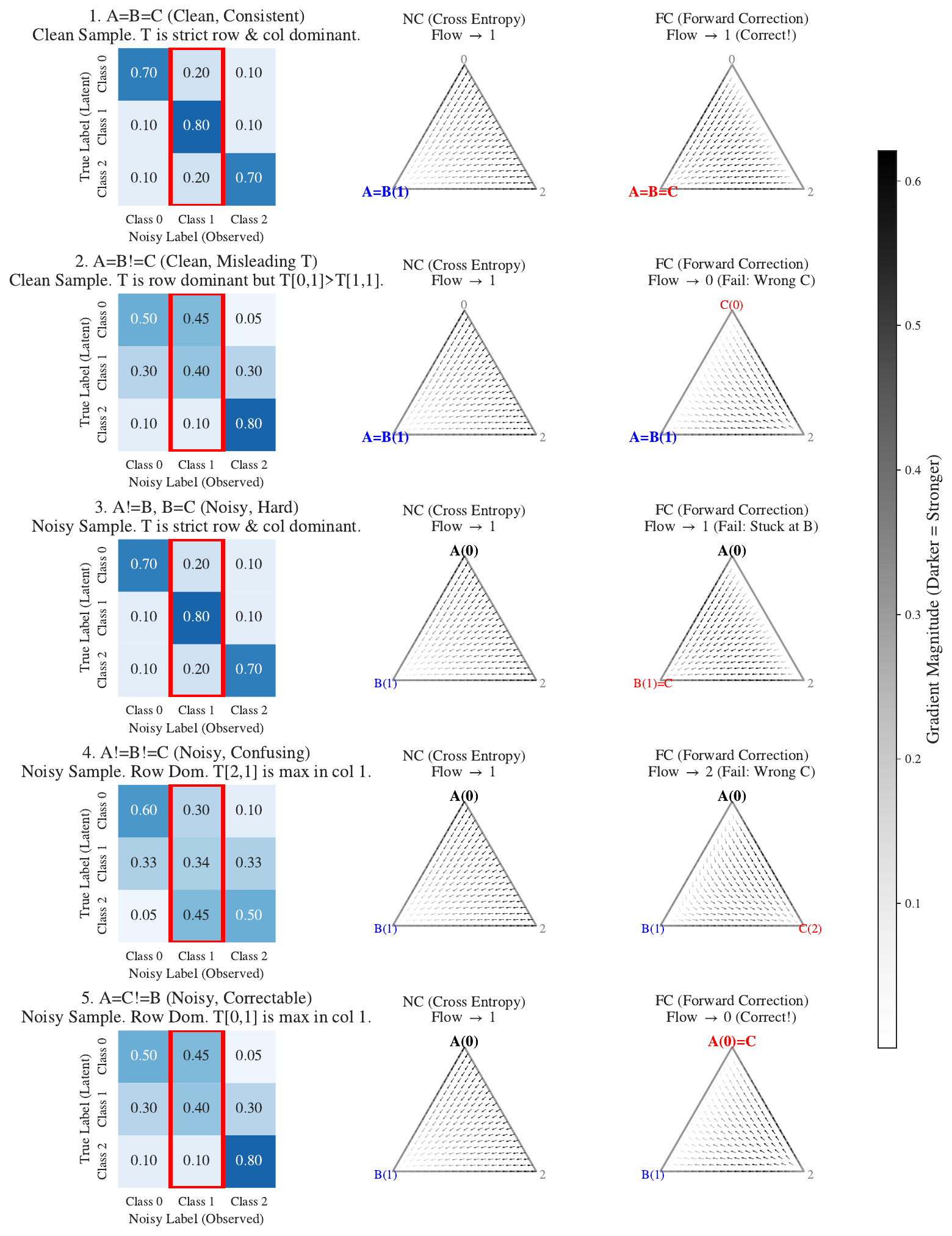}
    \caption{Gradient vector field of the FC loss on a 3-class simplex. 
We denote the clean label vertex as $A$ ($\mathbf{e}_{y^*}$), the noisy label as $B$ ($\mathbf{e}_{y^n}$), and the theoretical FC optimum as $C$ ($\mathbf{e}_{k_{\fc}^*}$). 
The vector field confirms that the global minimum is at $C$. 
However, the noisy vertex $B$ acts as a strong, non-optimal attractor. 
The vanishing gradient magnitude (``dead zone'') near $B$ traps SGD, leading to the 'pseudo-convergence' analyzed in \Cref{app:proof_gradient_flow}.
}
    \label{fig:gradient}
\end{figure}

\end{document}